\def\ps@pprintTitle{%
 \let\@oddhead\@empty
 \let\@evenhead\@empty
 \def\@oddfoot{}%
 \let\@evenfoot\@oddfoot}
\tikzset{%
  dot/.style n args = {4}{name=#3, circle, draw, inner sep=1pt, minimum size=4pt, fill=black, label={[shift={(#1,#2)}]#4:$#3$}},
  lat/.style n args = {4}{name=#3, circle, draw, inner sep=1pt, minimum size=4pt, label={[shift={(#1,#2)}]#4:$#3$}},
  tr/.style  n args = {4}{name=#3, regular polygon,regular polygon sides=4, draw, inner sep=1pt, minimum size=5pt, fill=gray, label={[shift={(#1,#2)}]#4:$#3$}},
  intv/.style n args = {4}{name=#3, regular polygon,regular polygon sides=4, draw, inner sep=1pt, minimum size=5pt, fill=black, label={[shift={(#1,#2)}]#4:$#3$}},
  >={Latex[width=1.5mm,length=2mm]},
  every picture/.style={semithick}
}
\algnewcommand\algorithmicinput{\textbf{Input:}}
\algnewcommand\INPUT{\item[\algorithmicinput]}
\algnewcommand\algorithmicoutput{\textbf{Output:}}
\algnewcommand\OUTPUT{\item[\algorithmicoutput]}
\algrenewcommand\alglinenumber[1]{\footnotesize\bfseries#1:}
\def\P{P}
\newcommand{\doo}{\textrm{do}}
\newcommand{\cond}{\,\vert\,}
\newcommand{\Pa}{\textrm{Pa}}
\newcommand{\Ch}{\textrm{Ch}}
\newcommand{\An}{\textrm{An}}
\newcommand{\De}{\textrm{De}}
\def\independenT#1#2{\mathrel{\rlap{$#1#2$}\mkern2mu{#1#2}}}
\newcommand{\indep}{\protect\mathpalette{\protect\independenT}{\perp}}
\newtheorem{theorem}{Theorem}[section]
\newtheorem{definition}{Definition}[section]
\newtheorem{corollary}{Corollary}[section]
\newtheorem{lemma}{Lemma}[section]
\newproof{pol}{Proof of Lemma \ref{lem:dist_property}}
\begin{document}

\title{Surrogate Outcomes and Transportability}

\author{S.~Tikka\corref{cor1}}
\ead{santtu.tikka@jyu.fi}

\author{J.~Karvanen\corref{cor2}}
\ead{juha.t.karvanen@jyu.fi}
\address{Department of Mathematics and Statistics, University of Jyvaskyla, P.O. Box 35 (MaD) FI-40014, Finland}

\cortext[cor1]{Corresponding author.}

\begin{abstract}
Identification of causal effects is one of the most fundamental tasks of causal inference. We consider an identifiability problem where some experimental and observational data are available but neither data alone is sufficient for the identification of the causal effect of interest. Instead of the outcome of interest, surrogate outcomes are measured in the experiments. This problem is a generalization of identifiability using surrogate experiments \citep{bareinboim2012a} and we label it as surrogate outcome identifiability. We show that the concept of transportability \citep{bareinboim2013:general} provides a sufficient criteria for determining surrogate outcome identifiability for a large class of queries.
\end{abstract}

% a causal effect of interest is not identifiable from observational data alone but some experimental data is available for the identification task. This corresponds to a real-world setting where experiments were conducted on a set of variables, which we call surrogate outcomes, but the variables of interest were not measured.

\begin{keyword}
Causality \sep do-calculus \sep Experiment \sep Graph \sep Identifiability \sep Mediator.
\end{keyword}

\maketitle

\section{Introduction} \label{sect:intro}

%In real-world settings we are often prohibited from experimentation for practical or ethical reasons. 
In the formal framework of causal inference it is sometimes possible to make experimental claims using observational data alone. First, we construct a causal model by encoding our knowledge into a graph and specify a probability distribution over the observed variables. An experiment can now be carried out symbolically in the model through an intervention, which is an action that forces variables to take specific values irrespective of the mechanism that would determine their values otherwise. The question is whether the observed probability distribution alone is enough to determine the effect of this intervention. This problem, known as the \emph{identifiability} problem, has been studied extensively in literature and solutions in the form of graphical criteria \citep{pearl1995,pearl2009} as well as algorithms have been proposed \citep{huang2006:complete, shpitser2006, tian2002:phdthesis}. Various extensions to the identifiability problem have emerged in recent years. These include concepts such as \emph{transportability}, where identifiability is considered in a target population, but information for the task is available from multiple source populations \citep{bareinboim2013:metatransportability,pearl2014}. %Another extension considers the role of selections bias, where instead of the observed probability distributions we are provided with a distribution conditioned on the selection mechanism \citep{bareinboim2015}.

The presence of unobserved confounders often renders causal effects of interest non-identifiable from observational data alone. This leads us to ask whether experimental data can be of use in the identification task. The concept of \emph{surrogate experiments} or \emph{z-identifiability} considers this problem in a setting where in addition to the observed probability distribution, experimentation is allowed on a set of variables that is disjoint from the interventions of the target causal effect \citep{bareinboim2012a} and the experimental distribution of these surrogate experiments is available over all variables. By experimental distribution we mean a distribution of a set of outcomes variables when some variables have been intervened on. We consider a more general problem than $z$-identifiability: instead of assuming that a experimental distribution over all variables is available, we assume that a collection of experimental distributions is available where every variable has not necessarily been observed. This kind of setting can occur for example in mediation analysis, where we have previously performed an experiment where the mediator was the outcome variable. Another example is a setting where we are interested in two outcome variables but have only measured one of them in a previous experiment.

In a practical study we usually have access to information about population characteristics when performing an experiment. Sometimes not all of these characteristics are be measured in conjunction with the experiment itself which leads to incomplete knowledge regarding the experimental distribution. Suppose that we are interested in the experimental distribution of another variable, one that was not measured during the experiment. The question is whether this distribution can be obtained from the observational data and the outcome of the previous experiment, which we refer to as the \emph{surrogate outcome}. We label this generalization of identifiability as \emph{surrogate outcome identifiability}.

Remarkably, a connection can be drawn between surrogate outcome identifiability and transportability. Transportability is concerned with identifiability across conceptual \emph{domains} where both observational and experimental data are available from each domain. In practical terms, a domain can be for example a city, and data from multiple domains in this case could be for example the age distributions of the populations of these cities. Naturally, discrepancies between causal mechanisms can arise between domains, which has to be taken into account in the causal modeling framework. Typically, we are interested in the effect of an intervention in a single domain, known as the target domain, and the domains providing additional information for the task are known as source domains. However, existing methods for determining transportability only allow a single experiment to take place within a single domain, whereas our surrogate identifiability is concerned with multiple distributions from differing experiments in a single domain. We incorporate the framework of transportability by depicting each available experiment of the surrogate outcome problem as a source domain of a transportability problem with the same experiments.

An introductory example illustrates the difference between surrogate outcome identifiability and $z$-identifiability. We are interested in the causal effect of $X_1$ and $X_2$ on $Y_1$ and $Y_2$ in the graph of Fig.~\ref{fig:intro_example}, which is easily determined to be non-identifiable from the joint distribution $P(v)$ alone for example via the application of the ID algorithm \citep{shpitser2006,tikka2017:causaleffect}. Suppose now that two surrogate outcomes were measured in previous experiments providing us with two experimental distributions, $P(y_2 \cond \doo(x_2), x_1, z, w)$ and $P(y_1 \cond \doo(x_1), z, w)$. The availability of these two distributions cannot be represented as $z$-identifiability problem, since they are conditional causal effects and they have common interventions with the target causal effect. We cannot directly regard this problem as a transportability problem either, since we are concerned with only a single domain. The causal effect can now be identified with the help of the two experimental distributions, which we will show later in Section~\ref{sect:sotr}.

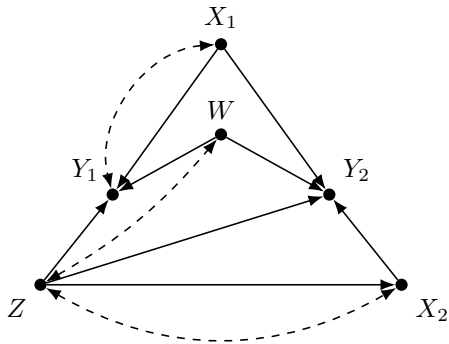
\begin{figure}[H]
  \centering
  \begin{tikzpicture}[scale=1.6]
  \node [dot = {0}{0}{X_1}{above}] at (1.5,2) {};
  \node [dot = {0}{0}{X_2}{below right}] at (3,0) {};
  \node [dot = {0}{0}{Z}{below left}] at (0,0) {};
  \node [dot = {0}{0}{Y_1}{above left}] at (0.6,0.75) {};
  \node [dot = {0}{0}{Y_2}{above right}] at (2.4,0.75) {};
  \node [dot = {0}{0}{W}{above}] at (1.5,1.25) {};
  
  \draw [->] (Z) -- (X_2);
  \draw [->] (Z) -- (Y_2);
  \draw [->] (Z) -- (Y_1);
  \draw [->] (W) -- (Y_1);
  \draw [->] (W) -- (Y_2);
  \draw [->] (X_1) -- (Y_2);
  \draw [->] (X_1) -- (Y_1);
  \draw [->] (X_2) -- (Y_2);
  
  \draw [<->,dashed] (X_1) to [bend right=50]  (Y_1);
  \draw [<->,dashed] (Z) to [bend right=10]  (W);
  \draw [<->,dashed] (Z) to [bend right=30]  (X_2);
  \end{tikzpicture}
  \caption{A graph where the causal effect $P(y_1,y_2 \cond \doo(x_1,x_2))$ is not identifiable from $P(v)$ alone.}
  \label{fig:intro_example}
\end{figure}
\noindent

%A transportability problem with no source domains is simply a $z$-identifiability problem.

%We base our analysis of surrogate outcome identifiability on a framework of causal inference known as the \emph{probabilistic causal model} \citep{pearl2009}. This model incorporates the functional relationships between the variables of interest and their joint probability distribution. A causal model is associated with a \emph{directed acyclic graph}. Directed acyclic graphs can be used to infer conditional independence relationships from the model through a concept known as \emph{d-separation}.. The primary tool for analyzing effects of actions in causal models is \emph{do-calculus} \citep{pearl1995}, which consists of three inference rules for manipulating interventional distributions. 

In this paper we propose a way to transform a surrogate outcome problem into a transportability problem. We show that the identifiability of the transformed problem is a sufficient condition for identifiability of the surrogate outcome problem. We derive an identifiability algorithm for surrogate outcome problems and implement it as a part of the R package \emph{causaleffect} \citep{rsoft,tikka2017:causaleffect}.

%we derive a sufficient criterion for surrogate outcome identifiability by representing it as a transportability problem where multiple source domains correspond to different experimental distributions. Furthermore, we show that transportability cannot completely characterize surrogate outcome identifiability.

\section{Notation and definitions} \label{sect:defi}

We assume that the reader is familiar with graph theoretic concepts fundamental to causal inference and refer them to works such as \citep{koller2009}. We use capital letters to denote vertices and the respective variables and small letters to denote their values. We sometimes write singleton sets $\{X\}$ as $X$ for clarity. A directed graph with a vertex set $V$ and an edge set $E$ is denoted by $(V, E)$. For a graph $G = (V,E)$ and a set of vertices $W \subseteq V$ the sets $\Pa(W)_G, \Ch(W)_G, \An(W)_G$ and $\De(W)_G$ denote a set that contains $W$ in addition to its parents, children, ancestors and descendants in $G$, respectively. A subgraph of a graph $G = (V, E)$ induced by a set of vertices $ W \subset  V$ is denoted by $G[W]$. This subgraph retains all edges $V_i \rightarrow V_j$ of $G$ such that $V_i,V_j \in  W$. The graph obtained from $G$ by removing all incoming edges of $X$ and all outgoing edges of $Z$ is written as $G[\overline{X},\underline{Z}]$. A back-door path from $X$ to $Y$ is a path with an edge incoming to $X$ and $Y$. A topological ordering $\varphi$ of $G$ is an ordering of its vertices in which every node is smaller than its descendants in $G$. The set of vertices smaller than a vertex $V_i$ in $\varphi$ is denoted by $V_\varphi^{(i)}$. To facilitate analysis of identifiability and the generalization to surrogate outcomes, we must first define the probabilistic causal model \citep{pearl2009}.

\begin{definition}[Probabilistic causal model]
A \emph{probabilistic causal model} is a quadruple
\[ M = (U, V, F, \P(u)), \]
where $U$ is a set of unobserved (exogenous) variables that are determined by factors outside the model,
$V$ is a set $\{V_1,\ldots,V_n\}$ of observed (endogenous) variables that are determined by variables in $U \cup V$. $F$ is a set of functions $\{f_{V_1},\ldots,f_{V_n}\}$ such that each $f_{V_i}$ is a mapping from (the respective domains of) $U \cup (V \setminus \{V_i\})$ to $V_i$, and such that the entire set $F$ forms a mapping from $U$ to $ V$, and $\P(u)$ is a joint probability distribution of the variables in the set $U$.
\end{definition}

Each causal model induces a graph through the following construction: A vertex is added for each variable in $ U \cup  V$ and a directed edge from $V_i \in U \cup V$ into $V_j \in  V$ whenever $f_{V_j}$ is defined in terms of $V_i$. Conventionally, causal inference focuses on a sub-class of models with additional assumptions: each $U_i \in U$ appears in at most two functions of $F$, the variables in $U$ are mutually independent and the induced graph of the model is acyclic. Models that satisfy these additional assumptions are called \emph{semi-Markovian causal models}. The induced graph of a semi-Markovian model is called a \emph{semi-Markovian graph}. In semi-Markovian graphs every $U_i \in  U$ has at most two children. In semi-Markovian models it is common not to depict background variables in the induced graph explicitly. Unobserved variables $U_i \in U$ with exactly two children are not denoted as $V_j \leftarrow U_i \rightarrow V_k$ but as a bidirected edge $V_j \leftrightarrow V_k$ instead. Furthermore, unobserved variables with only one or no children are omitted entirely. We also adopt these abbreviations. For semi-Markovian graphs the sets $\Pa(\cdot)_G, \Ch(\cdot)_G, \An(\cdot)_G$ and $\De(\cdot)_G$ contain only observed vertices. Additionally, a subgraph $G[W]$ of a semi-Markovian graph $G$ retains any bidirected edges between vertices in $W$.

A graph induced by a probabilistic causal model also encodes conditional independences among the variables in the model through a concept known as d-separation. We use the definition in \citep{shpitser2008} which explicitly accounts for the presence of bidirected edges making it suitable for semi-Markovian graphs.

\begin{definition}[d-separation] A path $P$ in a semi-Markovian graph $G$ is said to be d-separated by a set $ Z$ if and only if either $P$ contains one of the following three patterns of edges: $I \rightarrow M \rightarrow J$, $I \leftrightarrow M \rightarrow J$ or $I \leftarrow M \rightarrow J$, such that $M \in  Z$, or $P$ contains one of the following three patterns of edges: $I \rightarrow M \leftarrow J$, $I \leftrightarrow M \leftarrow J$, $I \leftrightarrow M \leftrightarrow J$, such that $\De(M)_G \cap  Z = \emptyset$. Disjoint sets $ X$ and $ Y$ are said to be d-separated by $ Z$ in $G$ if every path from $ X$ to $ Y$ is d-separated by $ Z$ in $G$.
\end{definition}
Since we are dealing entirely with semi-Markovian graphs, we will henceforth refer to them simply as graphs. If no conditional independence statements other than those already encoded in the graph are implied by the distribution of the variables in the model, we say that the distribution is \emph{faithful} \citep{spirtes2000}.

A causal model allows us to manipulate the functional relationships encoded in the set $F$. An \emph{intervention} $\doo(x)$ on a model $M$ forces $X$ to take the specified value $x$. The intervention also creates a new sub-model, denoted by $M_{x}$, where the functions in $F$ that determine the value of $X$ have been replaced with constant functions. The \emph{interventional distribution} of a set of variables $Y$ in the model $M_{x}$ is denoted by $\P(y \cond \doo(x))$. This distribution is also known as the \emph{causal effect} of $X$ on $Y$. Three inference rules known as \emph{do-calculus} \citep{pearl1995} provide the means for manipulating interventional distributions.
\begin{enumerate}
\item{Insertion and deletion of observations: 
\[
  \P(y \cond \doo(x), z, w) = \P(y \cond \doo(x), w), \text{ if } (Y \indep Z\cond X,  W)_{G[\overline{ X}]}.
\]}
\item{Exchange of actions and observations:
\[
  \P(y \cond \doo(x, z), w) = \P(y \cond \doo(x), z, w), \text{ if } (Y \indep Z\cond X,  W)_{G[\overline{ X},\underline{Z}]}.
\]}
\item{Insertion and deletion of actions
\[
  \P(y \cond \doo(x, z), w) = \P(y \cond \doo(x), w), \text{ if } (Y \indep Z\cond X,  W)_{G[\overline{X},\overline{Z(W)}]},
\]
where $Z(W) = Z \setminus \An(W)_{G[\overline{X}]}.$}
\end{enumerate}

Regarding the identifiability problem, the goal is to transform $\P(y \cond \doo(x))$ into an expression that does not contain the $\doo$-operator using do-calculus. A causal effect that admits this transformation is called \emph{identifiable}, which is formally defined in e.g. \citep{shpitser2006}. Do-calculus has been shown to be complete with respect to the identifiability problem \citep{shpitser2006,huang2006:complete} as well as the transportability and $z$-identifiability problems \citep{bareinboim2013:general,bareinboim2012a}.

%\begin{definition}[Identifiability] \label{def:ident}
%Let $G = (V, E)$ be a graph and let $X$ and $Y$ be disjoint sets of variables such that $X,Y \subset V$. The causal effect of $X$ on $Y$ is said to be \emph{identifiable} from $\P(v)$ in $G$ if $\P(y \cond \doo(x))$ is uniquely computable from $\P(v)$ in any causal model that induces $G$.
%\end{definition}

Special graphs known as \emph{c-components} (confounded components) are crucial for causal effect identification \citep{shpitser2006}.
\begin{definition}[c-component] \label{def:c-component} Let $G = (V,E)$ be a graph. A c-component $C = (V_C,E_C)$ (of $G$) is a subgraph of $G$ such that every pair of vertices in $C$ is connected via a bidirected path (a path consisting entirely of bidirected edges). A c-component $C$ is \emph{maximal} if there are no vertices in $V_C$ that are connected to $V \setminus V_C$ in $G$ via bidirected paths and $C$ is an induced subgraph of $G$.
\end{definition}
The joint distribution of a causal model admits the so-called c-component factorization with respect to the set of maximal c-components of the induced graph $G$ of the model, denoted by $C(G)$. Henceforth we will use the term c-component to refer to maximal c-components for brevity.

If in addition to the joint observed probability distribution $\P(v)$ experimentation is allowed on a set $Z$, the identifiability problem is known as $z$-identifiability \citep{bareinboim2012a}. The set $Z$ is known as the set of \emph{surrogate experiments}.
\begin{definition}[$z$-identifiability] \label{def:zid} Let $G = (V, E)$ be a graph and let $X$, $Y$ and $Z$ be disjoint sets of variables such that $X,Y,Z \subset V$. The causal effect of $X$ on $Y$ is said to be $z$-\emph{identifiable} from $\P$ in $G$ if $\P(y \cond \doo(x))$ is uniquely computable from $\P(v)$ together with the interventional distributions $\P(v \setminus z^\prime \cond \doo(z^\prime))$, for all $Z^\prime \subseteq Z$, in any model that induces $G$.
\end{definition}

As an example of a $z$-identifiable causal effect, we consider the identification of $\P(y \cond \doo(x))$ from $\P(x,y,z,w)$ and $\P(x,y,w \cond \doo(z))$ in the graph of Fig.~\ref{fig:zidex_graph}. This effect is not identifiable without the experimental distribution, which can be verified for example by using the ID algorithm of \citep{shpitser2006}.

\begin{figure}[ht]
  \centering
  \begin{tikzpicture}[scale=1.7]
  \node [dot = {0}{0}{W}{below left}] at (-1,0) {};
  \node [dot = {0}{0}{Z}{below left}] at (0,0) {};
  \node [dot = {0}{0}{X}{below}] at (1,0) {};
  \node [dot = {0}{0}{Y}{below right}] at (2,0) {};
  
  \draw [->] (W) -- (Z);
  \draw [->] (Z) -- (X);
  \draw [->] (X) -- (Y);
  
  \draw [<->,dashed] (W) to [bend left=50] (Z);
  \draw [<->,dashed] (Z) to [bend left=50] (X);
  \draw [<->,dashed] (Z) to [bend left=50] (Y);
  \draw [->] (W) to [bend right=35]  (Y);
  \end{tikzpicture}
  \caption{A graph where the causal effect of $X$ on $Y$ is $z$-identifiable using experiments on $Z$.}
  \label{fig:zidex_graph}
\end{figure}
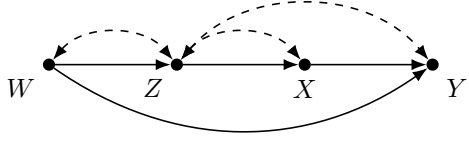
\noindent
We derive the effect using do-calculus:
\begin{align*}
\P(y \cond \doo(x))  &= \sum_{w} \P(y \cond \doo(x),w) \P(w \cond \doo(x)) \\
                     &= \sum_{w} \P(y \cond \doo(z,x),w) \P(w \cond \doo(x)) \\
                     &= \sum_{w} \P(y \cond \doo(z,x),w) \P(w) \\
                     &= \sum_{w} \P(y \cond \doo(z),x,w) \P(w)
\end{align*}
where the second equality follows from the third rule of do-calculus, since $(Y \indep Z \cond X)_{G[\overline{X},\overline{Z}]}$. The third equality follows from the third rule of do-calculus, since $(W \indep X)_{G[\overline{X}]}$ and the fourth equality follows from the second rule of do-calculus, since $(Y \indep X \cond W)_{G[\overline{Z},\underline{X}]}$. The term $\P(y \cond \doo(z), x)$ is identifiable from $\P(x,y,w \cond \doo(z))$ via marginalization and conditioning and $\P(w)$ is identifiable from $\P(x,y,z,w)$ via marginalization.

%When we have access to experiments on surrogate outcomes the identification task is again altered slightly. Now the final expression is allowed to contain the do-operator in terms of the form $\P(w_i \cond \doo(z_i))$ as long as this distribution is available from previous experiments. We formalize this notion in the following definition.

The available information in a $z$-identifiability problem consists of a single observational distribution and experimental distributions resulting from interventions on subsets of $Z$. Our goal is to extend this problem to a setting where experimentation is allowed on the subsets of multiple surrogate experiments. Furthermore, we do not require that the distribution of the entire set $V$ is known under these experiments or that the experiments have to be disjoint from $X$, the intervention in the target causal effect. We formalize these notions in the following definition.

\begin{definition}[Surrogate outcome query] \label{def:so_query} A \emph{surrogate outcome query} is a quadruple $(X, Y, G, \mathcal{S})$, where $G = (V, E)$ is a graph, $X, Y \subset V$ are disjoint sets of variables. The set of surrogate outcomes $\mathcal{S} = \{ (Z_1, W_1), \ldots, (Z_n, W_n) \}$ is a collection of intervention--outcome pairs $(Z_i,W_i)$ such that for all $i = 1,\ldots,n$ it holds that $W_i \subset \De(Z_i)_G \setminus Z_i$, $Z_i \subset \An(W_i)_G \setminus W_i$, $\De(W_i)_G \cap Z_i = \emptyset$ and $\An(W_{ij})_G \setminus W_i = \An(W_i)_G \setminus W_i$ for each $W_{ij} \in W_i$.
%$Z_i, W_i \subset V$ and $Z_i \subset \An(W_i)_G \setminus \{W_i\}$ for all $i = 1,\ldots,n$.
\end{definition}
While requirements for the sets $Z_i$ and $W_i$ may appear complicated, they are only a formal statement of the fact that we require all variables subject to experimentation to precede all of the outcome variables in the causal order. We also assume that outcomes in a single intervention--outcome pair have the same ancestors. This assumption is made for technical reasons and outcomes with different ancestry can still be represented through separate intervention--outcome pairs. The intuition behind these assumptions is that each intervention--outcome pair should correspond to a single experiment where every manipulated variable has a potential effect on the outcomes. For example, in the graph of Fig.~\ref{fig:intervention_outcome_example}, we would not consider $(\{Z_1,Z_2\},\{W_1,W_2\})$ to be a valid intervention--outcome pair, since manipulating $Z_2$ cannot affect $W_1$.

\begin{figure}[H]
  \centering
  \vspace{0.25cm}
  \begin{tikzpicture}[scale=1.7]
  \node [dot = {0}{0}{Z_1}{below}] at (0,0) {};
  \node [dot = {0}{0}{W_1}{below}] at (1,0) {};
  \node [dot = {0}{0}{Z_2}{below}] at (2,0) {};
  \node [dot = {0}{0}{W_2}{below}] at (3,0) {};

  \draw [->] (Z_1) -- (W_1);
  \draw [->] (W_1) -- (Z_2);
  \draw [->] (Z_2) -- (W_2);
  
  \end{tikzpicture}
  \caption{An example graph on the proper form of intervention--outcome pairs.}
  \label{fig:intervention_outcome_example}
\end{figure}
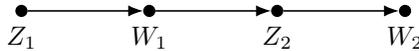
\noindent
Identifiability of a causal effect defined by a surrogate outcome query is characterized by the following definition.

\begin{definition}[Surrogate outcome identifiability] \label{def:so_ident}
Let $(X, Y, G, \mathcal{S})$ be a surrogate outcome query. Let $I_i = \cup_{Z^\prime \subseteq Z_i} \P(w_i \cond \doo(z^\prime), \An(w_i)_{G[\overline{Z^\prime}]} \setminus (w \cup z^\prime))$, where $(Z_i,W_i) \in \mathcal{S}$, and let $\mathcal{I} = \cup_{i = 1}^n I_i \cup \P(v)$. Then the causal effect of $X$ on $Y$ is said to be surrogate outcome identifiable from $\mathcal{I}$ in $G$ if $\P(y \cond \doo(x))$ is uniquely computable from $\mathcal{I}$ in any model that induces $G$.
\end{definition}

The precise formulation of the sets $I_i$ and the experimental distributions is needed to closely connect surrogate outcome identifiability to transportability as we will show later in Section~\ref{sect:sotr}. While the assumption that the interventional distributions are always available for every subset $Z^\prime$ of every $Z_i$ is technical, it can have a real-world interpretation as well. For example, it is realistic to assume that when the joint effect of two medical treatments is studied, either the effect of each individual treatment is already known or they can be estimated from the same experiment. In many cases, it may be unethical to test for the joint effect if it is not known that the individual treatments are safe and efficient.

%For example, a medical condition could be treated using two different treatments simultaneously. However, the safety and effectiveness of both treatments have to be determined before the simultaneous approach can be considered.

As an example on surrogate outcome identifiability, we consider the graphs of Fig.~\ref{fig:intro_graph} and attempt to identify the causal effect of $X$ on $Y$ from $\P(x,y,z)$ and $\P(z \cond \doo(x))$. This corresponds to setting $\mathcal{S} = \{(X, Z)\}$ in Definition~\ref{def:so_ident}. It should be noted that this problem cannot be expressed as a $z$-identifiability problem, since the experimental distribution that is available contains an intervention on $X$ and it is not a full experimental distribution over the variables $X, Y$ and $Z$, but is instead restricted to $Z$ only.

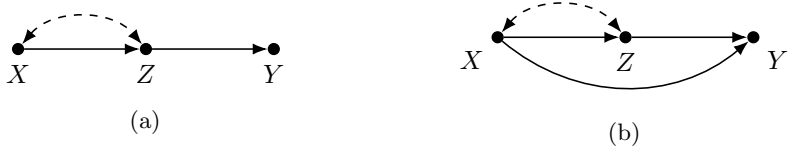
\begin{figure}[ht]
  \begin{subfigure}{0.45\textwidth}
    \centering
    \begin{tikzpicture}[scale=1.7]
    \node [dot = {0}{0}{X}{below}] at (0,0) {};
    \node [dot = {0}{0}{Z}{below}] at (1,0) {};
    \node [dot = {0}{0}{Y}{below}] at (2,0) {};
    
    \draw [->] (X) -- (Z);
    \draw [->] (Z) -- (Y);
    
    \draw [<->,dashed] (X) to [bend left=50]  (Z);
    \end{tikzpicture}
    \caption{}
    \label{fig:intro_graph1}
  \end{subfigure}
  \hfill
  \begin{subfigure}{0.50\textwidth}
    \centering
    \begin{tikzpicture}[scale=1.7]
    \node [dot = {0}{0}{X}{below left}] at (0,0) {};
    \node [dot = {0}{0}{Z}{below}] at (1,0) {};
    \node [dot = {0}{0}{Y}{below right}] at (2,0) {};
    
    \draw [->] (X) -- (Z);
    \draw [->] (Z) -- (Y);
    
    \draw [<->,dashed] (X) to [bend left=50]  (Z);
    \draw [->] (X) to [bend right=40]  (Y);
    \end{tikzpicture}
    \caption{}
    \label{fig:intro_graph2}
  \end{subfigure}
  \caption{Graphs where the causal effect $\P(y \cond \doo(x))$ is not identifiable from $\P(x,y,z)$ alone, but is identifiable via surrogate outcomes using $\P(z \cond \doo(x))$.}
  \label{fig:intro_graph}
\end{figure}
\noindent
We can derive the effect as follows in both Fig.~\ref{fig:intro_graph}(\subref{fig:intro_graph1}) and \ref{fig:intro_graph}(\subref{fig:intro_graph2}):
\begin{align*}
\P(y \cond \doo(x)) &= \sum_{z}\P(y \cond \doo(x), z)\P(z \cond \doo(x)) \\
          &= \sum_{z}\P(y \cond x, z)\P(z \cond \doo(x)).
\end{align*}
Both terms in this expression are computable from $\mathcal{I}$: the term $\P(y \cond x,z)$ can be obtained via conditioning from $\P(x,y,z)$ and the term $\P(z \cond \doo(x))$ is already included in $\mathcal{I}$.
Here the second equality follows from the second rule of do-calculus, since $(Y \indep X \cond Z)_{G[\underline{X}]}$. In this trivial example we can easily determine the correct sequence of applications of do-calculus to reach the desired expression. In general, it is difficult to find such a sequence or determine whether such a sequence even exists. For tasks such as identifiability, the solution was to construct an algorithm that either derives the expression for the effect, or returns a graph structure that can be used to construct two models where the distributions over the observed variables agree, but the interventional distributions differ. Instead of developing a similar algorithm for surrogate outcome identifiability, we will describe this problem as a transportability problem, for which a complete solution already exists in the form of an algorithm \citep{bareinboim2014:completeness}. 

\section{Identifying surrogate outcome queries using transportability} \label{sect:sotr}

In order to describe the connection between surrogate outcomes and transportability we first provide the definition of a \emph{transportability diagram}.

\begin{definition}[Transportability diagram] \label{def:tr_diagram}
Let $( M, M^* )$ be a pair of probabilistic causal models relative to domains $(\pi, \pi^*)$, sharing a graph $G$. The pair $(M, M^*)$ is said to induce a transportability diagram $D$ if $D$ is constructed as follows: every edge in $G$ is also an edge in $D$, $D$ contains an extra edge $T_i \rightarrow V_i$ whenever there might exist a discrepancy $f_{V_i} \neq f^*_{V_i}$ or $\P(u_i) \neq \P^*(u_i)$ between $M$ and $M^*$.
\end{definition}

In the above definition, a domain is simply a formalization of the intuitive notion of different contexts of the same phenomena. The domains serve as indices to differentiate between the different causal models that are depicted by the same graph $G$ and to associate the available observational and experimental distributions with specific models. We illustrate Definition~\ref{def:tr_diagram} via an example. We consider two models, $M$ and $M^*$ that share graph $G$ of Fig.~\ref{fig:tr_example}(\subref{fig:tr_exampleG}) and have the same causal mechanism with the exception that $f_Z \neq f^*_Z$. This discrepancy between the models is now depicted by the transportability diagram of Fig.~\ref{fig:tr_example}(\subref{fig:tr_exampleD}) where the corresponding transportability node and the extra edge have been added. Transportability nodes are denoted by gray squares. We note that transportability diagrams and transportability nodes are sometimes called selection diagrams and selection nodes \citep{bareinboim2013:metatransportability} which should not be confused with the concept of selection bias.

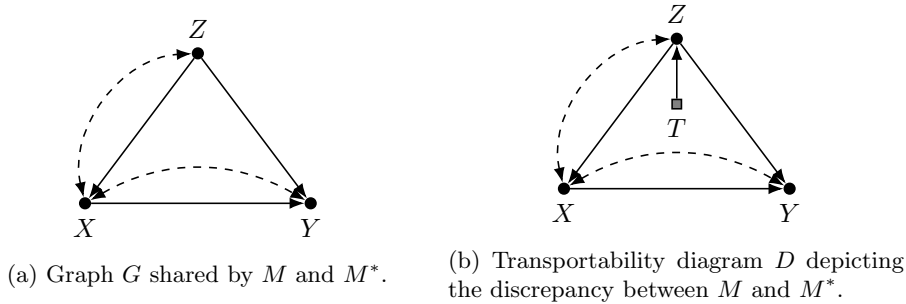
\begin{figure}[ht]
  \begin{subfigure}{0.45\textwidth}
    \centering
    \begin{tikzpicture}[scale=1.5]
    \node [dot = {0}{0}{X}{below}] at (0,0) {};
    \node [dot = {0}{0}{Z}{above}] at (1,1.33) {};
    \node [dot = {0}{0}{Y}{below}] at (2,0) {};
    
    \draw [->] (X) -- (Y);
    \draw [->] (Z) -- (Y);
    \draw [->] (Z) -- (X);
    
    \draw [<->,dashed] (X) to [bend left=50]  (Z);
    \draw [<->,dashed] (X) to [bend left=30]  (Y);
    \end{tikzpicture}
    \caption{Graph $G$ shared by $M$ and $M^*$.}
    \label{fig:tr_exampleG}
  \end{subfigure}
  \hfill
  \begin{subfigure}{0.50\textwidth}
    \centering
    \begin{tikzpicture}[scale=1.5]
    \node [dot = {0}{0}{X}{below}] at (0,0) {};
    \node [dot = {0}{0}{Z}{above}] at (1,1.33) {};
    \node [dot = {0}{0}{Y}{below}] at (2,0) {};
    \node [tr = {0}{0}{T}{below}] at (1,0.75) {};
    
    \draw [->] (X) -- (Y);
    \draw [->] (Z) -- (Y);
    \draw [->] (Z) -- (X);
    \draw [->] (T) -- (Z);
    
    \draw [<->,dashed] (X) to [bend left=50]  (Z);
    \draw [<->,dashed] (X) to [bend left=30]  (Y);
    \end{tikzpicture}
    \caption{Transportability diagram $D$ depicting the discrepancy between $M$ and $M^*$.}
    \label{fig:tr_exampleD}
  \end{subfigure}
  \caption{An example of a transportability diagram where a discrepancy between two domains occurs in the causal mechanism for $Z$.}
  \label{fig:tr_example}
\end{figure}
\noindent

The connection between transportability and surrogate outcome identifiability is not obvious. The general idea is to represent every available experimental distribution as a domain $\pi_i$ where discrepancies described by the transportability nodes $T_i$ take place in variables that have not been measured or randomized in the corresponding experiment, that is in $V \setminus \{W_i \cup  Z_i\}$. In the domain $\pi_i$ experimentation on $W_i$ is available and the goal is to now use the information provided by each domain to derive a \emph{transport formula} for the causal effect. A transportability problem is often implicitly described by the target of identification and available experiments \citep[e.g.][]{bareinboim2013:metatransportability, bareinboim2014:completeness}. Similarly to a surrogate outcome query, we formalize transportability queries in the following definition.

\begin{definition}[Transportability query] \label{def:mz_tr_query} A \emph{transportability query} is an octuple
\[
  (X, Y, \mathcal{D}, D^*, \Pi, \pi^*, \mathcal{Z}, Z^*),
\]
where $\mathcal{D} = \{D^{(1)},\ldots,D^{(n)}\}$ is a collection of transportability diagrams relative to source domains $\Pi = \{\pi_1, \ldots, \pi_n \}$, $D^* = (V, E)$ is the graph of the target domain $\pi^*$, $X, Y \subset V$ are disjoint sets of variables, $\mathcal{Z} = \{ Z_1, \ldots, Z_n \}$ is a collection of sets of variables in which experiments can be conducted in each domain $\pi_i$, and $ Z^*$ is the set of available experiments in the target domain.
\end{definition}

Each transportability diagram in $\mathcal{D}$ depicts the discrepancies between the domains $\pi_i$ and $\pi^*$. Mirroring Definition~\ref{def:so_ident}, transportability of a causal effect defined by a transportability query is characterized by the following definition.

\begin{definition}[Transportability] \label{def:mz_tr}
Let $(X, Y, \mathcal{D}, D^*, \Pi, \pi^*, \mathcal{Z}, Z^*)$ be a transportability query. Let $(\P^{(i)}(v), I^{(i)}_z)$ be the pair of observational and interventional distributions of $\pi_i$, where $I^{(i)}_z = \cup_{Z^\prime \subseteq  Z_i} \P^{(i)}(v \cond \doo(z^\prime))$, and in an analogous manner, let $(\P^*(v), I_z^*)$ be the observational and interventional distributions of $\pi^*$. Let $\mathcal{I} = \cup_{i=1}^n ( \P^{(i)}(v), I^{(i)}_z ) \cup ( \P^*(v), I^*_z )$ be the set of available information. The causal effect $\P^*(y \cond \doo(x))$ is said to be transportable from $\Pi$ to $\pi^*$ in $\mathcal{D}$ with information $\mathcal{I}$ if $\P(y \cond \doo(x))$ is uniquely computable from $\mathcal{I}$ in any model that induces $\mathcal{D}$.
\end{definition}

This definition is referred to as $mz$-transportability in \citep{bareinboim2014:completeness}. Henceforth the superscript $(i)$ is used to refer to the source domain $\pi_i$. A distribution $\P^{(i)}(v)$ governing a source domain is simply a shorthand notation for the conditional distribution where the transportability nodes of the corresponding domain are active, meaning that $\P^{(i)}(v) = \P^*(v \cond t^{(i)})$, where $T^{(i)}$ is the set of all transportability nodes of $\pi_i$.

We present an example on transportability of $\P(y \cond \doo(x))$ using two source domains. The transportability diagrams $D_1$ and $D_2$ associated with the sources are depicted in Fig.~\ref{fig:tr_example2}(\subref{fig:tr_example2D1}) and Fig.~\ref{fig:tr_example2}(\subref{fig:tr_example2D2}) for $\pi_1$ and $\pi_2$, respectively. In transportability diagrams, black squares denote variables for which experimentation is available in the corresponding domain. We assume that experiments on $Z$ are available in $\pi_1$ and on $W$ in domain $\pi_2$. No experiments are available in the target domain $\pi^*$. The graph $G$ of the target domain can be obtained from either $D_1$ or $D_2$ by simply omitting the transportability nodes. The corresponding transportability query for this problem is
\[
Q = (X,Y,\{D_1,D_2\},G,\{\pi_1,\pi_2\},\pi^*,\{\{Z\},\{W\}\},\emptyset).
\]

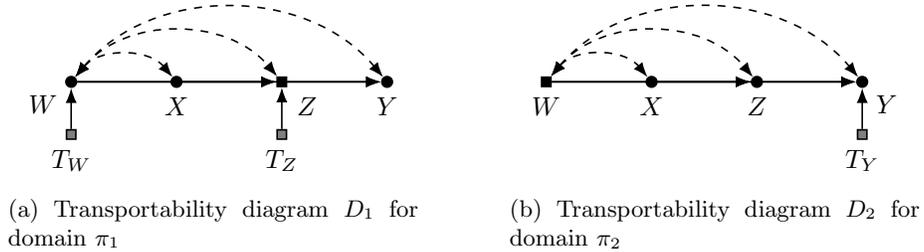
\begin{figure}[ht]
  \begin{subfigure}{0.45\textwidth}
    \centering
    \begin{tikzpicture}[scale=1.4]
    \node [dot = {0}{0}{W}{below left}] at (-1,0) {};
    \node [dot = {0}{0}{X}{below}] at (0,0) {};
    \node [intv = {0}{0}{Z}{below right}] at (1,0) {};
    \node [dot = {0}{0}{Y}{below}] at (2,0) {};
    \node [tr = {0}{0}{T_W}{below}] at (-1,-0.5) {};
    \node [tr = {0}{0}{T_Z}{below}] at (1,-0.5) {};
    
    \draw [->] (W) -- (Z);
    \draw [->] (X) -- (Z);
    \draw [->] (Z) -- (Y);
    \draw [->] (W) -- (Y);
    \draw [->] (T_W) -- (W);
    \draw [->] (T_Z) -- (Z);
    
    \draw [<->,dashed] (W) to [bend left=50] (X);
    \draw [<->,dashed] (W) to [bend left=50] (Y);
    \draw [<->,dashed] (W) to [bend left=50] (Z);
    \end{tikzpicture}
    \caption{Transportability diagram $D_1$ for domain $\pi_1$}
    \label{fig:tr_example2D1}
  \end{subfigure}
  \hfill
  \begin{subfigure}{0.45\textwidth}
    \centering
    \begin{tikzpicture}[scale=1.4]
    \node [intv = {0}{0}{W}{below}] at (-1,0) {};
    \node [dot = {0}{0}{X}{below}] at (0,0) {};
    \node [dot = {0}{0}{Z}{below}] at (1,0) {};
    \node [dot = {0}{0}{Y}{below right}] at (2,0) {};
    \node [tr = {0}{0}{T_Y}{below}] at (2,-0.5) {};
    
    \draw [->] (W) -- (Z);
    \draw [->] (X) -- (Z);
    \draw [->] (Z) -- (Y);
    \draw [->] (W) -- (Y);
    \draw [->] (T_Y) -- (Y);
    
    \draw [<->,dashed] (W) to [bend left=50] (X);
    \draw [<->,dashed] (W) to [bend left=50] (Y);
    \draw [<->,dashed] (W) to [bend left=50] (Z);
    \end{tikzpicture}
    \caption{Transportability diagram $D_2$ for domain $\pi_2$}
    \label{fig:tr_example2D2}
  \end{subfigure}
  \caption{Transportability diagrams related to two source domains}
  \label{fig:tr_example2}
\end{figure}
\noindent
The transport formula can be derived using do-calculus as follows:
\begin{align*}
\P^*(y \cond \doo(x)) &= \sum_{z} \P^*(z \cond \doo(x))\P^*(y \cond \doo(x), z) \\
                     &= \sum_{z} \P^*(z \cond \doo(x,w))\P^*(y \cond \doo(x,z)) \\
                     &= \sum_{z} \P^*(z \cond x, \doo(w))\P^*(y \cond \doo(z)) \\
                     &= \sum_{z} \P^*(z \cond x, \doo(w), t_y)\P^*(y \cond \doo(z), t_z, t_w) \\
                     &= \sum_{z} \P^{(2)}(z \cond x, \doo(w))\P^{(1)}(y \cond \doo(z))
\end{align*}
Where the equalities follow from the following sequence: second equality from rules three and two by $(Z \indep W \cond X)_{G_{[\overline{X},\overline{W}]}}$ and $(Y \indep Z \cond X)_{G[\overline{X},\underline{Z}]}$, third equality from rules two and three by $(Z \indep X \cond W)_{G[\overline{W},\underline{X}]}$ and $(Y \indep X \cond Z)_{G[\overline{Z},\overline{X}]}$, fourth equality from rule one by $(Z \indep T_Y \cond X,W)_{G[\overline{W}]}$ and $(Y \indep \{T_Z,T_W\} \cond Z)_{G[\overline{Z}]}$. The last equality is just a rewrite of the terms in the shorthand notation for active transportability nodes of a specific domain.

Next, we will outline the procedure to transform a surrogate outcome identifiability query into a transportability query.

\begin{definition}[Query transformation] \label{def:so_into_tr} Let $Q_S = (X, Y, G, \mathcal{S} )$ be a surrogate outcome query that is to be transformed into a transportability query $Q_T = (X, Y, \mathcal{D}, G, \Pi, \pi^*, \mathcal{Z}, \emptyset)$, where sets $X$ and $Y$ remain unchanged. The graph of the target domain $\pi^*$ is $G$. The set of source domains $\Pi = \{\pi_1, \ldots, \pi_n\}$ and the collection of their respective transportability diagrams $\mathcal{D} = \{D^{(1)},\ldots,D^{(n)}\}$  are constructed from $G$ as follows: $D^{(i)}$ contains an edge $T^{(i)}_j \rightarrow V_j$ for every vertex $V_j \in (\De(Z_i)_G \setminus W_i) \cup (C_{W_i} \setminus An(W_i)_{G[\overline{Z_i}]})$, where $C_{W_i} = \bigcup_j C_{W_{ij}}$ and $C_{W_{ij}}$ is the set of vertices of the c-component that contains the vertex $W_{ij} \in W_i$. The collection of available experiments is obtained directly from $\mathcal{S}$ by setting $\mathcal{Z} = \{Z_1, \ldots, Z_n\} $ ($ Z^* = \emptyset$ for $\pi^*$).
\end{definition}

The transformation provided by Definition~\ref{def:so_into_tr} serves as our basis for solving a given surrogate outcome identifiability problem. Transportability nodes are used to denote our lack of experimental information and to exert control over which transformed transportability queries should be identifiable. For each set $Z_i$, we know that the flow of information caused by the intervention of $Z_i$ will not propagate to non-descendants of $Z_i$, which is why we add a transportability node for each vertex in $\De(Z_i)_G \setminus W_i$. However, confounding must also be taken into account in the outcome set $W_i$, which is why a transportability node is added for each vertex of each c-component that shares a vertex with $W_i$ with the exception of ancestors of $W_i$. Later we will show that a causal effect is surrogate outcome identifiable if the corresponding causal effect obtained from the query transformation is transportable.

We return to the example on surrogate outcome identifiability in Section~\ref{sect:defi} and show how the surrogate outcome query is transformed into a transportability query in this instance. The task is to identify $\P(y \cond \doo(x))$ from $\P(x,y,z)$ and $\P(z \cond \doo(x))$ in the graph $G$ of Fig.~\ref{fig:intro_graph}(\subref{fig:intro_graph1}). The corresponding surrogate outcome query is 
\[
  Q_S = (X, Y, G, \{(X,Z)\}).
\]
The set $\mathcal{S}$ consists of a single element $(X,Y)$, which means that our transformed query will have a single source domain $\pi_1$. The transportability diagram for this domain is constructed according to Definition~\ref{def:so_into_tr} by adding a transportability node for each vertex in $\De(X)_G \setminus \{Z\} = \{X,Y\}$. The set $C_Z \setminus An(Z)_G$ is empty so no other transportability nodes have to be added. The resulting transportability diagram $D^{(1)}$ is shown in Fig.~\ref{fig:sotr_example}. The transformed query is now
\[
  Q_T = (X,Y,\{D^{(1)}\},G,\{\pi_1\},\pi^*,\{X\},\emptyset).
\]

\begin{figure}[H]
  \centering
  \begin{tikzpicture}[scale=1.7]
  \node [intv = {0}{0}{X}{below}] at (0,0) {};
  \node [dot = {0}{0}{Z}{below}] at (1,0) {};
  \node [dot = {0}{0}{Y}{below}] at (2,0) {};
  \node [tr = {0}{0}{T_X}{above}] at (0,0.5) {};
  \node [tr = {0}{0}{T_Y}{above}] at (2,0.5) {};
  
  \draw [->] (X) -- (Z);
  \draw [->] (Z) -- (Y);
  \draw [->] (T_X) -- (X);
  \draw [->] (T_Y) -- (Y);
  
  \draw [<->,dashed] (X) to [bend left=50]  (Z);
  \end{tikzpicture}
  \caption{A transportability diagram resulting from a query transformation.}
  \label{fig:sotr_example}
\end{figure}
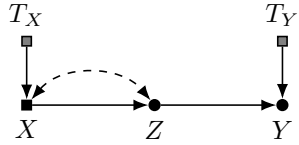
\noindent

Next, we present an algorithm labeled TRSO for computing transportability formulas that is a modification of the algorithm presented in \citep{bareinboim2014:completeness}. The purpose of this modified algorithm is to solve transportability queries that have been obtained through a query transformation of a surrogate outcome problem. In the original formulation, experimental information from the source domains is used only if identification in the target domain fails. Instead, we will prioritize experiments over observations to make full use of the available information.

\begin{algorithm}[H]
  \begin{algorithmic}[1]
  \INPUT{value assignments $x,y$, local distribution $P$ relative to domain index $S$ ($\pi_0$ denotes $\pi$), active experiments $I$, local transportability diagram $D$ of domain $S$, set of available experiments $\mathcal{Z}$. The set $T_i$ denotes the transportability nodes in $\pi_i$ ($T_0 = \emptyset$ for $\pi^*$). The set of all transportability diagrams $\mathcal{D}$ and a topological ordering $\varphi$ of $D$ are globally available.}
  \OUTPUT{$\P^*(y \cond \doo(x))$ in terms of transportability information $\mathcal{I}$, or FAIL.}
  \Statex
  \Statex {\bf function} \text{TRSO}$(y,x,P,I,S,D,\mathcal{Z})$
  \State {\bf if} $x = \emptyset$ {\bf return} $\sum_{v\setminus y} P.$
  \State {\bf if} $V \setminus \An(Y)_D \neq \emptyset$,
  \Statex \quad {\bf return} $\text{TRSO}(y,x \cap \An(y)_D, \sum_{V \setminus \An(Y)_D} P, S, D[\An(Y)_D],\mathcal{Z})$.
  \State {\bf let} $W = (V \setminus X) \setminus \An(Y)_{D[\overline{X}]}$.
  \Statex \quad {\bf if} $W \neq \emptyset $, {\bf return} $\text{TRSO}(y,x \cup w, P, I, S, D,\mathcal{Z})$.
  \State {\bf if} $C(D[V \setminus X]) = \{D[C_1],\ldots,D[C_n]\}$,
  \Statex \quad {\bf return} $\sum_{V \setminus (X \cup Y)} \prod_{i = 1}^n \text{TRSO}(c_i, v \setminus c_i, P, I, S, D,\mathcal{Z})$.
  \State {\bf if} $C(D[V \setminus X]) = \{D[C]\}$,
  \State \quad {\bf if} $I = \emptyset$, for $i = 0,\ldots,|\mathcal{D}|$,
  \Statex \quad \quad {\bf if} $(T_i \indep Y \cond X)_{D^{(i)}[\overline{X}]}$ {\bf and} $Z_i \cap X \neq \emptyset$, 
  \Statex \quad \quad {\bf let} $E_i = \text{TRSO}(y, x \setminus z_i, P, Z_i \cap X, i, D[V \setminus (Z_i \cap X)],\mathcal{Z})$.
  \State \quad {\bf if} $E_k \neq$ {\bf FAIL for some} $k$, {\bf return} $E_k$.
  \State \quad {\bf if} $C(D) \neq \{D\}$,
  \State \quad \quad {\bf if} $D[C] \in C(D)$, {\bf return} $\sum_{C \setminus Y} \prod_{V_i \in C} (\sum_{V \setminus V_\varphi^{(i)}} P) / (\sum_{V \setminus V_\varphi^{(i-1)}} P)$. 
  \State \quad \quad {\bf let} $C^\prime \subset V$ {\bf such that} $D[C^\prime] \in C(D)$ {\bf and} $C \subset C^\prime$.
  \Statex \quad \quad {\bf if} $I = \emptyset$, let $\mathcal{Z}^\prime = \emptyset$,
  \Statex \quad \quad {\bf else},
  \Statex \quad \quad \quad {\bf if} $\Pa(C^\prime)_D \cap T_S = \emptyset$, {\bf let} $\mathcal{Z}^\prime = \mathcal{Z}$,
  \Statex \quad \quad \quad {\bf else return FAIL}.
  \Statex \quad \quad {\bf return} 
  \Statex \quad \quad \quad $\text{TRSO}(y,x \cap c^\prime, \prod_{V_i \in C^\prime} P(v_i \cond V_\varphi^{(i - 1)} \cap C^\prime, v_\varphi^{(i-1)} \setminus c^\prime), I, S, D[C^\prime], \mathcal{Z}^\prime)$.
  \State \quad {\bf else return FAIL}.
  \end{algorithmic}
  \caption{A modified transportability algorithm for query transformations.}
  \label{alg:tr} 
\end{algorithm}

Some restrictions have to be imposed, since when transportability of causal effects is considered we always have access to the full experimental distributions $\P^{(i)}(v \cond \doo(z_i))$ in any domain $\pi_i$. This has to be taken into account by preventing certain operations on the joint distributions to be carried out when query transformations for surrogate outcomes are considered. For example when line 10 is triggered, we check whether the local c-component is affected by transportability nodes and prevent the use of experimental information if this is the case. The original formulation of the algorithm also includes a weighting scheme for effects that can be identified from multiple domains. We omit this part for clarity and use the first domain where an identifiable effect was encountered. The following theorem formally describes in the purpose of TRSO.

\begin{theorem} \label{thm:sufficient} Let $(X, Y, \mathcal{D}, G, \Pi, \pi^*, \mathcal{Z}, \emptyset )$ be the query transformation of a surrogate outcome query $(X, Y, G, \mathcal{S} )$. Then $\P(y \cond \doo(x))$ is surrogate outcome identifiable from $\mathcal{I}^1$ in $G$ if $\mathrm{TRSO}(y,x,\P^*(v),\emptyset,0,G,\mathcal{D},\varphi)$ succeeds using $\mathcal{I}^2$ and $\mathcal{Z}$, where information set $\mathcal{I}^1$ is the surrogate outcome information in Definition~\ref{def:so_ident} and $\mathcal{I}^2$ is the transportability information in Definition~\ref{def:mz_tr}.
\end{theorem}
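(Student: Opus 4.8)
The plan is to prove the statement as a soundness result in two stages: a successful run of $\mathrm{TRSO}$ returns a correct transport formula for the transformed query $Q_T$, and every experimental factor that this formula draws from a source domain $\pi_i$ is in fact computable from the surrogate outcome information $I_i$ of Definition~\ref{def:so_ident}. Combining the two stages shows that the same formula identifies $\P(y \cond \doo(x))$ from $\mathcal{I}^1$ in any model inducing $G$.

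For the first stage I would argue, by induction on the recursion of $\mathrm{TRSO}$, that the algorithm is sound with respect to $Q_T$, building on the correctness proof of the complete $mz$-transportability algorithm of \citep{bareinboim2014:completeness}. Every branch of $\mathrm{TRSO}$ realises either a valid do-calculus manipulation, a c-component factorisation, or a marginalisation, exactly as in the original. The three modifications we made -- prioritising experiments over observations (lines 6--7), imposing the guard $\Pa(C')_D \cap T_S = \emptyset$ on line 10, and dropping the weighting across domains -- only restrict \emph{when} experimental information is invoked and can at worst turn a success into a \textbf{FAIL}; none of them introduces an invalid step. Hence, if $\mathrm{TRSO}$ returns an expression, that expression equals $\P^*(y \cond \doo(x))$ in every model inducing $\mathcal{D}$. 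Because the target domain $\pi^*$ carries the graph $G$ and admits no experiments ($Z^* = \emptyset$), its interventional effect coincides with the surrogate outcome effect, $\P^*(y \cond \doo(x)) = \P(y \cond \doo(x))$.

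The crux is the second stage. Here I would show that each experimental factor $\mathrm{TRSO}$ extracts from a source $\pi_i$ is (a) \emph{portable}, i.e. invariant to the transportability nodes $T^{(i)}$, and (b) obtainable from $I_i$. For (a), recall that a source distribution is the shorthand $\P^{(i)}(v \cond \doo(z')) = \P^*(v \cond \doo(z'), t^{(i)})$; the guard $\Pa(C')_D \cap T_S = \emptyset$ on line 10 (for the recursive factorisation) together with the domain-admissibility test $(T_i \indep Y \cond X)_{D^{(i)}[\overline{X}]}$ on line 6 (for the terminal c-factor on line 9) ensures that no transportability node of $\pi_i$ is a direct input to the vertices of the extracted factor, so the conditioning on $t^{(i)}$ may be deleted by the first rule of do-calculus. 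For (b), I would prove a graph-theoretic lemma tailored to Definition~\ref{def:so_into_tr}: placing transportability nodes on $(\De(Z_i)_G \setminus W_i) \cup (C_{W_i} \setminus \An(W_i)_{G[\overline{Z_i}]})$ makes the surrogate conditional $\P(w_i \cond \doo(z'), \An(w_i)_{G[\overline{Z'}]} \setminus (w \cup z'))$ portable and, conversely, identifies it as the maximal experimental quantity of $\pi_i$ left unaffected by $T^{(i)}$. Every portable factor therefore reduces, by conditioning and marginalisation, to this surrogate conditional, which is an element of $I_i$.

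To assemble the argument, fix any model $M$ inducing $G$ and take the degenerate collection of domains in which every source model coincides with $M$; this is a valid instance inducing $\mathcal{D}$, so the transport formula returned by $\mathrm{TRSO}$ computes $\P(y \cond \doo(x))$ in $M$. By stage two its source factors equal the corresponding surrogate conditionals evaluated in $M$, while its observational part is $\P^*(v) = \P(v)$; all of these lie in $\mathcal{I}^1 = \cup_{i=1}^n I_i \cup \P(v)$. Since the resulting functional of $\mathcal{I}^1$ is the same for every $M$ inducing $G$, the effect is surrogate outcome identifiable. I expect the main obstacle to be the graph-theoretic lemma in stage two: one must verify that $\De(Z_i)_G \setminus W_i$ and $C_{W_i} \setminus \An(W_i)_{G[\overline{Z_i}]}$ jointly guard \emph{exactly} those factors the surrogate distribution cannot supply -- neither blocking a factor that $I_i$ in fact provides nor admitting one it does not -- so that the portable region of $\pi_i$ coincides precisely with the ancestral support of $W_i$ in $G[\overline{Z'}]$. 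Reconciling this placement with the precise conditional form of the experimental distributions in Definition~\ref{def:so_ident} is the single delicate point on which the reduction turns.
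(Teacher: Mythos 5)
Your proposal is correct in substance but assembles the pieces by a genuinely different route than the paper. Both arguments rest on the same two pillars: soundness of TRSO (your stage one matches the paper's Theorem~\ref{thm:soundness}) and a structural characterization of the source-domain factors in the output (your stage two is precisely the paper's Lemma~\ref{lem:dist_property}, proved there by a case analysis on which of lines 1, 9 or 10 terminates the recursion after line 6 fires). Where you diverge is the final assembly. The paper works syntactically: it invokes completeness of do-calculus for transportability (Theorem~\ref{thm:do_calculus_tr}), takes a do-calculus sequence in which transportability-node manipulations are isolated (Lemma~\ref{lem:do_sequence_exists}), and deletes those steps --- justified by the d-separation invariance of Corollary~\ref{cor:cond_indep} --- to obtain a bona fide do-calculus derivation of $q = \P(y \cond \doo(x))$ inside $G$ itself, to which Lemma~\ref{lem:dist_property} is then applied term by term. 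You instead argue semantically: instantiate the transportability problem with the degenerate collection in which every source model equals the target model $M$; this is legitimate because Definition~\ref{def:tr_diagram} only says a discrepancy \emph{might} exist at a transportability node, so identical models do induce $\mathcal{D}$, and the transport formula evaluated at $M$'s own distributions then computes $\P(y \cond \doo(x))$ in $M$, with stage two turning each source factor into a functional of $\mathcal{I}^1$ that is the same for every $M$ inducing $G$. Your route is shorter and dispenses entirely with the do-calculus-sequence machinery; the paper's route buys an explicit derivation within $G$, which is what directly underwrites Corollary~\ref{cor:replacement}'s recipe of ``omit the domain indicators'' and the worked sequences in Appendix B. One caveat: nearly all of the technical weight sits in your stage two, which you only sketch. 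In particular, ``no transportability node is a direct input to the extracted factor'' is weaker than the d-separation statements actually needed to delete $t^{(i)}$ by rule 1, and the reduction of a portable factor to the surrogate conditional also requires rule-3 steps (stripping $\doo(z^\prime)$ off non-descendants of $Z^\prime$), not only conditioning and marginalisation; both points are what the paper's Appendix A proof of Lemma~\ref{lem:dist_property} supplies.
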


Technical details and auxiliary results required to prove Theorem~\ref{thm:sufficient} are presented in the next section.

We recall the example from Section~\ref{sect:intro} on identifying $\P(y_1,y_2 \cond \doo(x_1, x_2))$ from $\P(v), \P(y_2 \cond \doo(x_2), x_1, z,w)$ and $\P(y_1 \cond \doo(x_1),z,w)$ in the graph of Fig.~\ref{fig:intro_example}, and solve its query transformation using TRSO. The set $\mathcal{S}$ of surrogate outcomes contains two intervention--outcome pairs, $S_1 = (X_1, Y_1)$ and $S_2 = (X_2, Y_2)$. For $S_1$, transportability nodes are added for 
\[
  (\De(X_1)_G \setminus \{Y_1\}) \cup (C_{Y_1} \setminus \An(Y_1)_{G[\overline{X_1}]}) = (\{X_1,Y_2,Y_1\} \setminus \{Y_1\}) \cup \emptyset = \{X_1,Y_2\}.
\]
For $S_2$, transportability nodes are added for 
\[
  (\De(X_2)_G \setminus \{Y_2\}) \cup (C_{Y_2} \setminus \An(Y_2)_{G[\overline{X_2}]}) = 
  (\{X_2,Y_2\} \setminus \{Y_2\} \cup \emptyset = \{X_2\}.
\]
The corresponding transportability diagrams $D^{(1)}$ and $D^{(2)}$ for the domains $\pi_1$ and $\pi_2$ of the query transformation are shown in Fig.~\ref{fig:intro_example_tr}.
\begin{figure}[H]
  \begin{subfigure}[t]{0.49\textwidth}
  \centering
  \begin{tikzpicture}[scale=1.5]
  \node [intv = {0}{0}{X_1}{above}] at (1.5,2) {};
  \node [dot = {0}{0}{X_2}{below right}] at (3,0) {};
  \node [dot = {0}{0}{Z}{below left}] at (0,0) {};
  \node [dot = {0}{0}{Y_1}{above left}] at (0.6,0.75) {};
  \node [dot = {0.1}{0.1}{Y_2}{above}] at (2.4,0.75) {};
  \node [dot = {0}{0}{W}{above}] at (1.5,1.25) {};

  \node [tr = {0}{0}{T_{Y_2}^{(1)}}{above}] at (3,1) {};
  \node [tr = {0}{0}{T_{X_1}^{(1)}}{above}] at (2.1,2.25) {};
  
  \draw [->] (Z) -- (X_2);
  \draw [->] (Z) -- (Y_2);
  \draw [->] (Z) -- (Y_1);
  \draw [->] (W) -- (Y_1);
  \draw [->] (W) -- (Y_2);
  \draw [->] (X_1) -- (Y_2);
  \draw [->] (X_1) -- (Y_1);
  \draw [->] (X_2) -- (Y_2);

  \draw [->] (T_{X_1}^{(1)}) -- (X_1);
  \draw [->] (T_{Y_2}^{(1)}) -- (Y_2);

  \draw [<->,dashed] (X_1) to [bend right=50]  (Y_1);
  \draw [<->,dashed] (Z) to [bend right=10]  (W);
  \draw [<->,dashed] (Z) to [bend right=30]  (X_2);
  \end{tikzpicture}
  \caption{$D^{(1)}$}
  \label{fig:intro_example_trdom1}
  \end{subfigure}
  \begin{subfigure}[t]{0.49\textwidth}
  \centering
  \begin{tikzpicture}[scale=1.5]
  \node [dot = {0}{0}{X_1}{above}] at (1.5,2) {};
  \node [intv = {0}{0}{X_2}{below right}] at (3,0) {};
  \node [dot = {0}{0}{Z}{below left}] at (0,0) {};
  \node [dot = {0}{0}{Y_1}{above left}] at (0.6,0.75) {};
  \node [dot = {0.1}{0.1}{Y_2}{above}] at (2.4,0.75) {};
  \node [dot = {0}{0}{W}{above}] at (1.5,1.25) {};
    \node [tr = {0}{0}{T_{X_2}^{(2)}}{above}] at (3,0.75) {};
  
  \draw [->] (Z) -- (X_2);
  \draw [->] (Z) -- (Y_2);
  \draw [->] (Z) -- (Y_1);
  \draw [->] (W) -- (Y_1);
  \draw [->] (W) -- (Y_2);
  \draw [->] (X_1) -- (Y_2);
  \draw [->] (X_1) -- (Y_1);
  \draw [->] (X_2) -- (Y_2);

  \draw [->] (T_{X_2}^{(2)}) -- (X_2);

  \draw [<->,dashed] (X_1) to [bend right=50]  (Y_1);
  \draw [<->,dashed] (Z) to [bend right=10]  (W);
  \draw [<->,dashed] (Z) to [bend right=30]  (X_2);
  \end{tikzpicture}
  \caption{$D^{(2)}$}
  \label{fig:intro_example_trdom2}
  \end{subfigure}
  \caption{Transportability diagrams obtained through the query transformation from Fig.~\ref{fig:intro_example} of the introductory example.}
  \label{fig:intro_example_tr}
\end{figure}
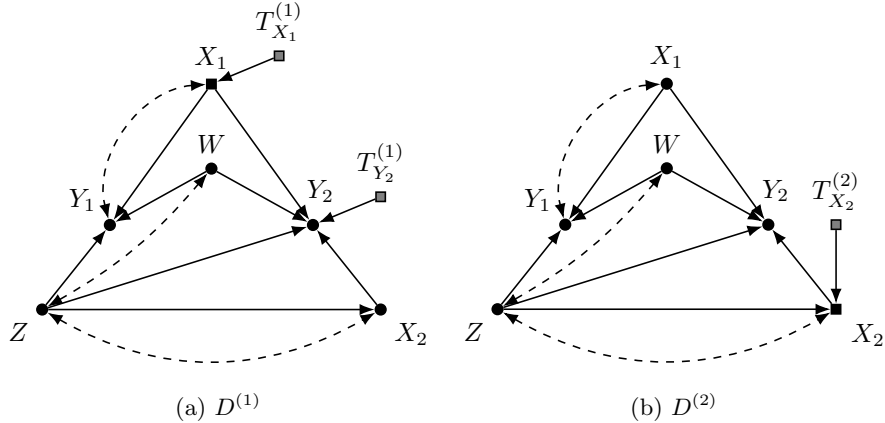
\noindent
By tracing the algorithm, we trigger line 4 first and obtain
\begin{align*}
  \P^*(y_1,y_2 \cond \doo(x_1, x_2)) &= \sum_{w,z} \left( \vphantom{\frac12} \P^*(y_2 \cond \doo(x_1,x_2,z,w,y_1))  \right.\,  \times \\
  & \quad \left. \P^*(y_1 \cond \doo(x_1, x_2,z,w,y_2)) \P^*(z,w \cond \doo(x_1, x_2,y_1,y_2)) \vphantom{\frac12} \right).
\end{align*}
Since $\An(Z,W)_G = \{Z,W\}$, line 2 and then line 1 are triggered for the last term, which is simply $\P^*(z,w)$. The recursive calls for the first two terms both trigger line 2 due to $Y_1$ not being an ancestor of $Y_2$ and $\{X_2,Y_2\}$ not being ancestors of $Y_1$. After these calls we have
\[
\P^*(y_1,y_2 \cond \doo(x_1, x_2)) = \!\sum_{w,z} \P^*(y_2 \cond \doo(x_1,x_2,z,w)) \P^*(y_1 \cond \doo(x_1,z,w)) \P^*(z,w).
\]
Line 10 is triggered next for both of the first two terms because 
\[(Y_1 \indep \{T^{(1)}_{X_1},T^{(1)}_{Y_2}\} \cond \{X_1,Z,W,X_2\})_{D^{(1)}[\overline{X_1,Z,W,X_2}]}.
\] 
and 
\[
(Y_2 \indep T^{(2)}_{X_2} \cond \{X_1,Z,W\})_{D^{(2)}[\overline{X_1,Z,W}]}.
\]
This means that intervention on $X_1$ is activated for the first term and intervention on $X_2$ is activated for the second terms. Finally, line 7 is triggered for both remaining terms and we obtain
\begin{align*}
\P^*(y_1,y_2|\doo(x_1, x_2)) &= \\
\sum_{w,z}& \P^{(1)}(y_2 \cond \doo(x_2), x_1, z, w)) \P^{(2)}(y_1 \cond \doo(x_1), z,w) \P^*(z,w),
\end{align*}
as the final expression. We obtain a solution for the original surrogate outcome problem by simply omitting the domain indicators from this expression
\[
\P(y_1,y_2|\doo(x_1, x_2)) = \sum_{w,z} \P(y_2 \cond \doo(x_2), x_1, z, w)) \P(y_1 \cond \doo(x_1), z,w) \P^*(z,w).
\]
We can also derive the effect using the \emph{causaleffect} R package with the following commands:
\begin{Verbatim}[fontsize = \small]
library(causaleffect)
library(igraph)

> fig1 <- graph.formula(x_1 -+ y_2, x_1 -+ y_1, w -+ y_1, w -+ y_2,
+  z -+ y_1, x_2 -+ y_2, z -+ y_2, z -+ x_2, w -+ z, z -+ w,
+  z -+ x_2, x_2 -+ z, y_1 -+ x_1, x_1 -+ y_1, simplify = FALSE)
> fig1 <- set.edge.attribute(fig1, "description", 9:14, "U")

> s1 <- list(
+  list(Z = c("x_2"), W = c("y_2")),
+  list(Z = c("x_1"), W = c("y_1"))
> )

> cat(surrogate.outcome(y = c("y_1", "y_2"), x = c("x_1","x_2"), 
+  S = s1, G = fig1))
\sum_{w,z}P_{x_2}(y_2|x_1,w,z)P(w,z)P_{x_1}(y_1|w,z)
\end{Verbatim}
The package uses the notation $\P_{x_1}(y_1 \cond w,z)$ to denote $\P(y_1 \cond \doo(x_1),w,z)$. 

In the next section we prove the correctness of TRSO and show that the omission of the domain indicators from its output always produces a valid expression for the original surrogate outcome identifiable causal effect.

\section{Correctness of the modified transportability algorithm} \label{sect:techical}

First, we recall that do-calculus is complete with respect to transportability and prove some useful lemmas.

\begin{theorem}[do-calculus characterization] \label{thm:do_calculus_tr} The rules of do-calculus together with standard probability manipulations are complete for establishing transportability of causal effects.
\end{theorem}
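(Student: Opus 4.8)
The plan is to reduce the claim to completeness results already available in the literature rather than re-deriving them from scratch, since the notion of transportability formalized in Definition~\ref{def:mz_tr} is precisely the $mz$-transportability of \citep{bareinboim2014:completeness}, for which do-calculus together with standard probability manipulations has been shown to be complete. The theorem then follows once this correspondence is made explicit, and it is convenient to split the argument into soundness and completeness.

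Soundness is the easy direction. A transportability diagram is an augmented causal graph in which each transportability node $T^{(i)}_j$ acts as an ordinary root variable, and the source distributions are merely the conditionals $\P^*(v \cond t^{(i)})$ in this augmented model, as noted after Definition~\ref{def:mz_tr}. Since do-calculus is sound in any causal model by \citep{pearl1995}, every rule applied within such a diagram yields a valid equality between transportability expressions, and marginalization, conditioning, and the chain rule are always admissible. Hence any expression derived for $\P^*(y \cond \doo(x))$ in terms of $\mathcal{I}$ is correct.

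The substantive content is completeness, which I would establish by contraposition: if $\P^*(y \cond \doo(x))$ cannot be obtained from $\mathcal{I}$ by any sequence of do-calculus rules and probability manipulations, then it is not transportable. This requires exhibiting two models inducing $\mathcal{D}$ that agree on every available observational and interventional distribution across all source and target domains, yet assign different values to the target effect. Such a construction generalizes the hedge-based counterexamples for plain identifiability in \citep{shpitser2006} and is exactly the argument carried out in \citep{bareinboim2014:completeness}, which I would invoke directly rather than reproduce. The main obstacle is therefore one of alignment rather than of new mathematics: one must verify that the information set $\mathcal{I}$ of Definition~\ref{def:mz_tr}---the collection of $\P^{(i)}(v)$ together with the interventional families $I^{(i)}_z$ over all $Z^\prime \subseteq Z_i$ in every domain, plus the analogous target-domain data---matches exactly the inputs assumed in the completeness theorem of \citep{bareinboim2014:completeness}. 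Once this correspondence is confirmed, the completeness result transfers verbatim, and together with soundness it yields the theorem.
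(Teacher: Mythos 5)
Your proposal is correct and matches the paper's approach: the paper's entire proof is a citation of \citep{bareinboim2014:completeness}, since Definition~\ref{def:mz_tr} is exactly the $mz$-transportability for which that completeness result was established. Your additional remarks on soundness and on verifying that the information set $\mathcal{I}$ matches the inputs of the cited theorem are a sensible elaboration of the same reduction, not a different argument.
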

\begin{proof}
See \citep{bareinboim2014:completeness}.
\end{proof}

Theorem~\ref{thm:do_calculus_tr} shows that a sequence of valid operations necessarily exists for a transportable causal effect. We define this sequence explicitly.

\begin{definition}[do-calculus sequence] \label{def:do_seq} Let $G$ be a graph or a transportability diagram, let $p$ be an identifiable or transportable causal effect and let $\mathcal{I}$ be a set of available information. A \emph{do-calculus sequence} for $p$ in $G$ is a pair
\[ \delta_p = ( \mathcal{R}, \mathcal{P} ), \]
where $\mathcal{R}$ is an $n-$tuple $(R_1,\ldots,R_n)$ such that each $R_i$ is either a member of the index set $\{m, c, r\}$ or a quintuple $(Y_i, Z_i, X_i, W_i, r_i)$ such that $(Y_i \indep Z_i \cond X_i, W_i)_{G^\prime}$ and $G^\prime = G[\overline X_i]$ if $r_i = 1$, $G^\prime = G[\overline X_i, \underline Z_i]$ if $r_i = 2$ and $G^\prime = G[\overline X, \overline Z_i(W_i)]$ if $r_i = 3$ and $Z_i(W_i) = Z_i \setminus \An(W_i)_{G[\overline X]}$. $\mathcal{P} = (p_1,\ldots,p_n)$ is a sequence of probability distributions such that if $R_i$ is of the first type described above, $p_{i}$ is obtained from $p_{i-1}$ via marginalization for $R_i = m$, conditioning for $R_i = c$ and the chain-rule if $R_i = r$. If $R_i$ is of the second type, then $p_{i}$ is obtained from $p_{i-1}$ using rule number $r_i$ of do-calculus licensed by the sets $X_i, Y_i, Z_i$ and $W_i$. Furthermore, when $p_0 = p$ is transformed as dictated by the sequence $\delta_p$, an expression $p_n$ is obtained such that each term that appears in $p_n$ is a member of $\mathcal{I}$ or computable from $\mathcal{I}$ without do-calculus.
\end{definition}

The idea is to use a do-calculus sequence of a transportable causal effect to construct a do-calculus sequence for its query transformation counterpart. However, as do-calculus statements stem from d-separation in the underlying graph, we must first establish that d-separation is invariant to the presence of transportability nodes.

\begin{lemma} \label{lem:d_sep_tr} Let $D$ be a transportability diagram and let $G$ be its subgraph obtained by removing all transportability nodes $T$ from $D$. Let $X,Y,Z$ be disjoint sets of vertices of $D$ such that they do not contain transportability nodes. Then $X$ and $Y$ are d-separated by $Z \cup T^\prime$ for every $T^\prime \subseteq T$ in $D$ if and only if $X$ and $Y$ are d-separated by $Z$ in $G$.
\end{lemma}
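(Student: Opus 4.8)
The plan is to exploit the fact that every transportability node in $D$ is a source vertex of degree one: by Definition~\ref{def:tr_diagram} each $T_i$ has no incoming edges, no bidirected edges, and exactly one outgoing edge $T_i \to V_i$. First I would observe that such a vertex cannot lie on any path between members of $X$ and $Y$. Indeed, a vertex of degree one can only appear as an endpoint of a path, and since $X$ and $Y$ contain no transportability nodes by assumption, no transportability node can be an endpoint of an $X$--$Y$ path either. Consequently the set of paths from $X$ to $Y$ in $D$ coincides exactly with the set of paths from $X$ to $Y$ in $G$; removing the degree-one sources $T$ leaves all of these paths intact.

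With the paths identified, it remains to show that a fixed path $P$ from $X$ to $Y$ (lying entirely in $G$) is d-separated by $Z \cup T'$ in $D$ if and only if it is d-separated by $Z$ in $G$, for every $T' \subseteq T$. I would argue this vertex by vertex along $P$ using the two blocking patterns of the d-separation definition. For a non-collider $M$ on $P$, blocking requires $M \in Z \cup T'$; but $M$ is an observed vertex while $T'$ consists only of transportability nodes, so $M \in Z \cup T'$ holds precisely when $M \in Z$. For a collider $M$ on $P$, blocking requires $\De(M)_D \cap (Z \cup T') = \emptyset$. Here the key point is that transportability nodes are sources and therefore have no ancestors, so no transportability node is a descendant of the observed vertex $M$; hence $\De(M)_D = \De(M)_G$ and $\De(M)_D \cap T' = \emptyset$. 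This yields $\De(M)_D \cap (Z \cup T') = \De(M)_G \cap Z$, so the collider condition in $D$ reduces to the one in $G$.

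Combining the two cases shows that $P$ is blocked by $Z \cup T'$ in $D$ exactly when it is blocked by $Z$ in $G$, and crucially this equivalence does not depend on the chosen subset $T'$. Since the $X$--$Y$ paths are the same in $D$ and $G$, d-separation of $X$ and $Y$ by $Z \cup T'$ in $D$ is equivalent to d-separation of $X$ and $Y$ by $Z$ in $G$ for each individual $T'$; because the right-hand condition is independent of $T'$, the universally quantified statement over all $T' \subseteq T$ follows immediately, establishing the biconditional.

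I expect the only delicate step to be the collider case: one must verify carefully that augmenting the conditioning set with transportability nodes can never activate a collider, which rests on the structural fact that transportability nodes occur only as roots and thus never appear among the descendants of an observed vertex. The non-collider case and the path-identification argument are then routine consequences of the degree-one, source-only nature of transportability nodes.
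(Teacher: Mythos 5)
Your proof is correct and takes essentially the same approach as the paper's: both arguments rest on the two structural facts that transportability nodes, being degree-one source vertices, can never lie on an $X$--$Y$ path (so the relevant paths in $D$ and $G$ coincide) and can never be descendants of a collider (so conditioning on any $T^\prime \subseteq T$ cannot activate a blocked collider). Your explicit non-collider/collider case analysis simply spells out in more detail what the paper's two-direction argument states tersely.
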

\begin{proof}
(i) Suppose that $X$ and $Y$ are d-separated by $Z\,\cup\,T^\prime$ in $D$. By assumption $X, Y$ and $Z$ do not contain any transportability nodes. This means that no path from $X$ to $Y$ contains transportability nodes, since a path containing such a node would necessarily have it as one of the path's endpoints by Definition~\ref{def:tr_diagram}. Furthermore, a transportability node cannot be a descendant of a collider by definition. Thus all paths from $X$ to $Y$ remain d-separated if we remove all transportability nodes from $D$.

(ii) Suppose that $X$ and $Y$ are d-separated by $Z$ in $G$. Adding transportability nodes to $G$ cannot create any new paths between $X$ and $Y$ since a transportability node is only connected to other vertices of the graph through a single vertex. As before, a transportability node cannot be a descendant of a collider by definition. Thus all paths between $X$ and $Y$ are d-separated by $Z \cup T^\prime$ in $D$ for any subset $T^\prime \subseteq T$.
\end{proof}

\begin{corollary} \label{cor:cond_indep} Let $Q_S$ be a surrogate outcome query with a graph $G$ and let $Q_T$ be its query transformation with a collection of transportability diagrams $\mathcal{D}$. Then any conditional independence statement $(Y \indep X \cond Z)_{D_i}$ that holds in some transportability diagram $D_i$ of $\mathcal{D}$ also holds in $G$ if the sets $Y$ and $X$ do not contain transportability nodes. Conversely, every conditional independence statement of $G$ holds in every diagram of $\mathcal{D}$.
\end{corollary}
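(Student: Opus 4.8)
The plan is to obtain the corollary directly from Lemma~\ref{lem:d_sep_tr} together with the fact that the notation $(Y \indep X \cond Z)_H$ encodes d-separation of $Y$ and $X$ by $Z$ in the graph $H$. Since d-separation is sound for reading conditional independence off any model inducing $H$, it suffices to translate the two claimed implications into statements about d-separation and then invoke the lemma.

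First I would record the structural fact that makes the lemma applicable: by Definition~\ref{def:so_into_tr}, each diagram $D_i \in \mathcal{D}$ arises from $G$ solely by adjoining transportability nodes $T^{(i)}_j$ together with their single edges $T^{(i)}_j \rightarrow V_j$. Deleting every transportability node from $D_i$ therefore returns $G$ exactly, so $G$ is precisely the subgraph of $D_i$ to which Lemma~\ref{lem:d_sep_tr} refers, and the lemma may be applied with $D = D_i$ for each $i$.

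For the forward direction I would split the conditioning set as $Z = Z_0 \cup T'$, where $Z_0 \subseteq V$ collects the ordinary vertices and $T' \subseteq T^{(i)}$ the transportability nodes occurring in $Z$. Because $X$ and $Y$ contain no transportability nodes, the hypotheses of Lemma~\ref{lem:d_sep_tr} are satisfied, and its ``only if'' direction turns the assumed separation $(Y \indep X \cond Z_0 \cup T')_{D_i}$ into $(Y \indep X \cond Z_0)_G$, which is the asserted independence in $G$ once the transportability nodes (absent from $G$) are dropped from the conditioning set. The converse is the ``if'' direction of the lemma with $T' = \emptyset$: any separation $(Y \indep X \cond Z)_G$ yields $(Y \indep X \cond Z \cup T')_{D_i}$ for every $T' \subseteq T^{(i)}$, hence in particular $(Y \indep X \cond Z)_{D_i}$, and this holds for every diagram $D_i$ of $\mathcal{D}$ since each shares the same subgraph $G$.

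No genuine obstacle arises, as the whole argument reduces to Lemma~\ref{lem:d_sep_tr} and the invariant relationship between each $D_i$ and $G$; the only point demanding care is the bookkeeping of transportability nodes that may sit inside the conditioning set $Z$, which is exactly what the ``for every $T' \subseteq T$'' quantifier of the lemma is designed to absorb.
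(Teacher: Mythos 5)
Your proposal is correct and is essentially the paper's own argument: the paper proves the corollary in one line by noting that $G$ is recovered from each $D_i$ by deleting the transportability nodes and then invoking the d-separation invariance result (the paper's citation of Theorem~\ref{thm:do_calculus_tr} at that point is evidently a slip for Lemma~\ref{lem:d_sep_tr}, which is the result you use). Your extra bookkeeping for transportability nodes sitting inside the conditioning set is sound and is in fact more careful than the paper, since part (i) of the lemma's proof establishes the implication from a single instance of $T'$, which is exactly what your splitting $Z = Z_0 \cup T'$ requires.
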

\begin{proof}
The proof immediately follows from Theorem~\ref{thm:do_calculus_tr} by noting that $G$ can be obtained from each element of $\mathcal{D}$ by removing all transportability nodes.
\end{proof}

We show that there always exists do-calculus sequence such that every operation that manipulates transportability nodes does not manipulate any other vertices at the same time.
\begin{lemma} \label{lem:do_sequence_exists}
Let $(X,Y,\mathcal{D},D^*,\Pi,\pi^*,\mathcal{Z},Z^*)$ be a transportability query and let $T$ be the set of all transportability nodes over the domains of $\mathcal{D}$ and the target diagram $D^*$. If $p = \P(y \cond \doo(x))$ is a transportable causal effect with transportability information $\mathcal{I}$ of Definition~\ref{def:mz_tr}, then there exists a do-calculus sequence $d_p = (\mathcal{R},\mathcal{P})$ such that whenever $R_i$ is of the form $(Y_i,Z_i,X_i,W_i,r_i)$ then either $Z_i \cap T = \emptyset$ or $Z_i \subset T$.
\end{lemma}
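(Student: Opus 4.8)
The plan is to begin with an arbitrary do-calculus sequence for $p$, which exists by the completeness result of Theorem~\ref{thm:do_calculus_tr}, and then to rewrite each rule application whose set $Z_i$ mixes transportability nodes with ordinary vertices into two applications that individually meet the requirement. The first thing I would verify is that only rule~1 steps can ever create such a mixed $Z_i$: in a rule~2 or rule~3 application the set $Z_i$ is placed under the intervention operator, and transportability nodes are auxiliary domain indicators that are never intervened upon, since every interventional distribution in Definition~\ref{def:mz_tr} acts on a subset of $V$ only. Consequently any step with $r_i \in \{2,3\}$ automatically satisfies $Z_i \cap T = \emptyset$, and the work reduces to rule~1.

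Next I would isolate a rule~1 step $R_i = (Y_i, Z_i, X_i, W_i, 1)$ with $Z_i \cap T \neq \emptyset$ and $Z_i \not\subseteq T$, and split its index set as $Z_i^T = Z_i \cap T$ and $Z_i^V = Z_i \setminus T$, both nonempty. Its licensing independence is $(Y_i \indep Z_i \cond X_i, W_i)_{G[\overline{X_i}]}$, where $G$ is the diagram on which the sequence operates. Since d-separation obeys the semi-graphoid axioms, decomposition yields $(Y_i \indep Z_i^T \cond X_i, W_i)_{G[\overline{X_i}]}$ and weak union yields $(Y_i \indep Z_i^V \cond X_i, Z_i^T, W_i)_{G[\overline{X_i}]}$; the latter is also transparent from the structure used in Lemma~\ref{lem:d_sep_tr}, as a transportability node has degree one and thus never lies in the interior of a path between $Y_i$ and $Z_i^V$. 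I would then replace $R_i$ by two consecutive rule~1 steps: one acting only on the ordinary vertices $Z_i^V$ while $Z_i^T$ stays in the conditioning set, and one acting only on the transportability nodes $Z_i^T$. The $Z$-set of the former is disjoint from $T$ and that of the latter is contained in $T$, so both conform to the claim. Whether $R_i$ deletes or inserts, I would order the two substeps so that the intermediate distribution is a well-formed conditional and their composite carries $p_{i-1}$ to exactly the original $p_i$.

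Carrying out this replacement at every mixed rule~1 step yields a new sequence $d_p' = (\mathcal{R}', \mathcal{P}')$ in which each rule application has a $Z$-set that is either disjoint from or contained in $T$. Because every substitution preserves the endpoints of the corresponding transformation, the terminal expression is unchanged, so its terms still lie in $\mathcal{I}$ or are computable from $\mathcal{I}$ without do-calculus, and $d_p'$ is a valid do-calculus sequence in the sense of Definition~\ref{def:do_seq}.

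I expect the main obstacle to lie not in the splitting itself, which is routine bookkeeping once decomposition and weak union supply the two required independences, but in pinning down the claim that rule~2 and rule~3 steps are automatically compliant. That claim rests on the semantics of transportability nodes as non-intervenable domain selectors, so that interventions throughout the derivation are confined to subsets of $V$, rather than on any graphical manipulation, and I would state it explicitly before proceeding.
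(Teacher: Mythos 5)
Your proposal is correct and follows essentially the same route as the paper's proof: start from an arbitrary do-calculus sequence guaranteed by Theorem~\ref{thm:do_calculus_tr}, observe that rule~2 and rule~3 steps are automatically compliant because transportability nodes are never intervened upon, and split each mixed rule~1 step into one step acting on $Z_i \setminus T$ and one acting on $Z_i \cap T$. If anything, your explicit use of weak union to license the intermediate conditional $(Y_i \indep Z_i^V \cond X_i, Z_i^T, W_i)_{G[\overline{X_i}]}$ is slightly more careful than the paper, which cites only the two decomposition statements with the same conditioning set $W_i$ for both substeps.
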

\begin{proof}
Let $d_p^\prime = (\mathcal{R}^\prime,\mathcal{P}^\prime)$ be any do-calculus sequence for $p$. It suffices to consider $R^\prime_i \in \mathcal{R}^\prime$ of the form $(Y_i,Z_i,X_i,W_i,r_i)$. If $r_i \in \{2,3\}$ there is nothing to prove, since the second and third rules of do-calculus manipulate interventions which are not allowed on transportability nodes. Let $r_i = 1$ and suppose that $Z_i \setminus T \neq \emptyset$. Then from Definition~\ref{def:do_seq} we know that $(Y_i \indep Z_i \cond X_i, W_i)_{G[\overline{X}]}$ which implies that $(Y_i \indep Z_i \setminus T \cond X_i, W_i)_{G[\overline{X}]}$ and $(Y_i \indep Z_i \cap T \cond X_i, W_i)_{G[\overline{X}]}$. Now, let $\mathcal{R}$ contain every member of $\mathcal{R}^\prime$ except that each $R_i^\prime$ with $r_i = 1$ and $Z_i \setminus T \neq \emptyset$ is replaced by $R_i^{1} = (Y_i,Z_i \cap T,X_i,W_i,r_i)$ and $R_i^{2} = (Y_i,Z_i \setminus T,X_i,W_i,r_i)$. Similarly, let $\mathcal{P}$ contain every member of $\mathcal{P}^\prime$ except that each $p_i^\prime$, where the corresponding $R_i^\prime$ has the aforementioned property, is replaced by $p_i^{1}$ and $p_i^{2}$ where $p_i^{1}$ is obtained from $p_{i-1}^\prime$ by applying the first rule of do-calculus with the sets $Y_i, Z_i \cap T, X_i$ and $W_i$, and $p_i^{2}$ is obtained from $p_i^{1}$ by applying the first rule of do-calculus with the sets $Y_i, Z_i \setminus T, X_i$ and $W_i$. By construction, $d_p = (\mathcal{R}, \mathcal{P})$ is a do-calculus sequence for $p$ with the desired property.
\end{proof}

\begin{theorem} \label{thm:soundness} TRSO is sound.
\end{theorem}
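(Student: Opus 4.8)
The plan is to prove soundness by induction on the depth of the recursion of TRSO, establishing the invariant that on every branch returning an expression (rather than FAIL), the returned quantity equals the effect requested of that call --- namely $\P^{(S)}(y \cond \doo(x))$ in the current domain $S$, with the components recorded in the active-experiment set $I$ treated as interventions supplied by a source experiment --- and that every factor appearing in it is a member of, or computable by marginalisation and conditioning from, the transportability information $\mathcal{I}$ of Definition~\ref{def:mz_tr}. Since TRSO is a modification of the sound $mz$-transportability algorithm of \citep{bareinboim2014:completeness}, and do-calculus is complete for transportability by Theorem~\ref{thm:do_calculus_tr}, the lines that TRSO inherits unchanged already implement valid do-calculus and probability manipulations. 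The work therefore concentrates on (a) confirming that the transportability nodes do not invalidate any of these inherited manipulations and (b) verifying the two branches modified by our prioritisation of experimental information.

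For part (a), every d-separation test invoked by the algorithm is performed in a transportability diagram $D^{(i)}$ whose conditioning set never places a transportability node on the $X$ or $Y$ side. By Lemma~\ref{lem:d_sep_tr} and Corollary~\ref{cor:cond_indep} each such test returns the same verdict as the corresponding test in the underlying graph $G$, so every do-calculus step licensed in $D^{(i)}$ is equally licensed in $G$ and conversely. Consequently the standard steps --- marginalising the non-ancestors of $Y$ (line~2), inserting the latent actions $W$ by rule~3 of do-calculus (line~3), the c-component factorisation (line~4), and the identification of the c-component factor together with its reduction to an ancestral c-component (lines~9 and~11) --- each transform the current effect into an equal expression exactly as in the unmodified algorithm, while the base case $x=\emptyset$ (line~1) is plain marginalisation.

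For part (b), the first modification is the experiment-activation block (lines~6--7). When no experiment is yet active ($I=\emptyset$) and a source domain $\pi_i$ satisfies $(T_i \indep Y \cond X)_{D^{(i)}[\overline{X}]}$ with $Z_i \cap X \neq \emptyset$, the independence licenses rule~1 of do-calculus to introduce the domain indicators, giving $\P^*(y \cond \doo(x)) = \P^*(y \cond \doo(x), t_i) = \P^{(i)}(y \cond \doo(x))$; since $Z_i \cap X$ is the part of the intervention that coincides with the experiments available in $\pi_i$, it may be recorded as the active experiment and the call recurses on the reduced intervention $x \setminus z_i$ in domain $i$. By the inductive hypothesis the returned $E_k$ is a correct expression for $\P^{(k)}(y \cond \doo(x))$ over $\mathcal{I}$, and the outer rule-1 identity makes it equal to the requested effect. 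The second modification is the safeguard in lines~10--11: when the current single c-component $D[C]$ lies strictly inside a c-component $C^\prime$ of $D$ and an experiment is active, the recursion on $C^\prime$ is carried out only after checking $\Pa(C^\prime)_D \cap T_S = \emptyset$. This condition guarantees, again through Corollary~\ref{cor:cond_indep}, that the factorisation of the c-component distribution over $C^\prime$ coincides in the active source domain $S$ and in the target, so reusing the experiments $\mathcal{Z}^\prime=\mathcal{Z}$ keeps the recursive expression valid; if the condition fails the algorithm returns FAIL and produces no expression, which cannot violate soundness.

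The main obstacle is part (b): showing that combining experimental factors drawn from distinct source domains with observational factors of the target never corrupts the formula. Everything hinges on the two transportability-node tests --- the independence $(T_i \indep Y \cond X)$ that activates a domain and the parent-set condition $\Pa(C^\prime)_D \cap T_S = \emptyset$ that governs the reuse of experiments inside a c-component --- each of which is exactly the d-separation condition under which rule~1 of do-calculus equates a target quantity with its source-domain counterpart, and Lemma~\ref{lem:d_sep_tr} ensures these tests behave identically whether or not the transportability nodes are present. Verifying that these two checks are precisely the ones needed to license the corresponding rule-1 applications, together with exhibiting a decreasing measure for the recursion (the size of $X$ together with the vertex set of $D$) so that the induction is well-founded, completes the argument.
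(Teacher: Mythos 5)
Your proposal is correct and takes essentially the same route as the paper: both delegate the soundness of the inherited lines to the original $mz$-transportability algorithm of \citep{bareinboim2014:completeness} and then check that the modifications (prioritized experiment activation and the gating on line 10) consist only of do-calculus-licensed steps or of restrictions --- returning FAIL or shrinking $\mathcal{Z}^\prime$ --- which cannot invalidate a returned expression. The paper's own proof is simply terser, observing that the line-10 checks merely gate a recursive call whose soundness was already established, rather than re-deriving the rule-1 and c-component invariance justifications you spell out.
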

\begin{proof} Lines 1 through 9 are identical to the original formulation of the transportability algorithm and their soundness was established in \citep{bareinboim2014:completeness} with the exception that the order of lines 6 and 10 is reversed. Line 10 is different from the original. On this line we first find the c-component $D[C^\prime]$ of $D$ such that $C \subset C^\prime$. This c-component necessarily exists since the c-components of $D[V \setminus X]$ are always subsets of the c-components of $D$. The c-component $D[C^\prime]$ is unique because the vertex sets of c-components of any graph are disjoint. Next we check if there is an active intervention. If there is no such intervention ($I = \emptyset$), we remove the ability for experimentation entirely by setting $\mathcal{Z}^\prime = \emptyset$. If there is an active experiment ($I \neq \emptyset$), we check whether it falls into the category of allowed experiments by evaluating if $\Pa(C^\prime)_D \cap T_S = \emptyset$. If it does not, the recursive call fails. If there were no active experiments ($I = \emptyset$) or active experimentation is permissible ($\Pa(C^\prime)_D \cap T_S = \emptyset$), we simply continue the recursion in the c-component $D[C^\prime]$. The checks for allowing experimentation do not affect the soundness of the recursive function call that follows them on line 10. This recursive call was shown to be sound in \citep{bareinboim2014:completeness}.
\end{proof}
The next result characterizes an important feature of the transport formulas produced by a successful application of TRSO.

\begin{lemma} \label{lem:dist_property} Let $(X, Y, \mathcal{D}, G, \Pi, \pi^*, \mathcal{Z}, \emptyset )$ be the query transformation of a surrogate outcome query $(X, Y, G, \mathcal{S} )$. If $\mathrm{TRSO}$ succeeds in transporting the causal effect $p = \P^*(y \cond \doo(x))$, then for every term that appears in the expression for $p$ of the form $\P^{(i)}(c \cond \doo(z^\prime), d^\prime)$ one of the following holds: either
\begin{align}
  \P^{(i)}(c \cond \doo(z^\prime), d^\prime) &= \P^{(i)}(w^* \cond \doo(z^\prime), x^\prime) \P^*(x^\prime), \\
\intertext{or}
  \P^{(i)}(c \cond \doo(z^\prime), d^\prime) &= \P^*(c \cond \An(c)_{G[\overline{Z^\prime}]} \setminus c), \\
\intertext{or}
  \P^{(i)}(c \cond \doo(z^\prime), d^\prime) &= \P^{(i)}(w^* \cond \doo(z^\prime), x^\prime),
\end{align}
where $x^\prime = \An(w^*)_{G[\overline{Z^\prime}]} \setminus (w^* \cup z^\prime)$, $W^* \subset W_i$ and there exists a set $Z_i$ such that $Z^\prime \subseteq Z_i$ and $(Z_i,W_i) \in \mathcal{S}$. Furthermore, the right-hand sides of (1), (2) and (3) are identifiable from the information set $\mathcal{I}$ of Definition~\ref{def:so_ident} when the domain indicators are omitted.
\end{lemma}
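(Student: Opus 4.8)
The plan is to argue by induction on the recursion tree of TRSO, tracking how the quadruple consisting of the domain index $S$, the active experiment set $I$, the local diagram $D$ and the local distribution $P$ evolves, and to isolate precisely the lines that can emit a factor carrying a source-domain indicator. A factor of the form $\P^{(i)}(c \cond \doo(z^\prime), d^\prime)$ with a nonzero index can be produced only inside a subtree whose root is the recursive call on line 6, since that is the unique line that sets the domain index to a source value $S = i$ while simultaneously activating the experiments $I = Z_i \cap X$. The index $S$ and the set $I$ are then threaded unchanged through lines 2, 3, 4 and 10, so throughout such a subtree we have $S = i$ and $I \subseteq Z_i$; this already supplies the set $Z_i$ with $Z^\prime \subseteq Z_i$ and $(Z_i, W_i) \in \mathcal{S}$ demanded by the statement. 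The actual emission of a factor happens either at the base case on line 1 or through the c-component formula on line 9, so I would classify every emitted factor by which of these two lines produced it and by the relationship between the emitted c-component $c$ and the transportability nodes $T^{(i)}$.

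The decisive structural input is the placement of transportability nodes in Definition~\ref{def:so_into_tr}: the diagram $D^{(i)}$ carries an edge into exactly the vertices of $(\De(Z_i)_G \setminus W_i) \cup (C_{W_i} \setminus \An(W_i)_{G[\overline{Z_i}]})$, so neither the outcome set $W_i$ nor its confounded ancestors inside $G[\overline{Z_i}]$ receive a discrepancy node. I would split into cases accordingly. If the emitted c-component $c$ is disjoint from the vertices targeted by $T^{(i)}$, then the mechanisms generating $c$ coincide in $\pi_i$ and $\pi^*$, and transferring the relevant d-separations through Lemma~\ref{lem:d_sep_tr} (equivalently Corollary~\ref{cor:cond_indep}) shows the factor is domain-invariant and collapses to the observational conditional $\P^*(c \cond \An(c)_{G[\overline{Z^\prime}]} \setminus c)$, which is form (2). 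If instead $c$ meets $W_i$, the guard $\Pa(C^\prime)_D \cap T_S = \emptyset$ enforced on line 10 — exactly the condition that forbids experimentation through a discrepancy node — guarantees that the only vertices of $c$ carrying experimental information are the clean outcomes and their clean ancestors, so the factor is precisely the available conditional experimental distribution $\P^{(i)}(w^* \cond \doo(z^\prime), x^\prime)$ with $x^\prime = \An(w^*)_{G[\overline{Z^\prime}]} \setminus (w^* \cup z^\prime)$ and $W^* \subseteq W_i$; this is form (3), and form (1) is the same object still paired with the marginal $\P^*(x^\prime)$ of those conditioning ancestors that line 9 has not yet divided out.

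The most delicate step, and the one I expect to be the main obstacle, is showing that the conditioning set $d^\prime$ that line 9 actually produces reduces to the canonical ancestral set $x^\prime = \An(w^*)_{G[\overline{Z^\prime}]} \setminus (w^* \cup z^\prime)$ rather than to some larger or incomparable collection of observed variables. This requires following how lines 2 and 3 restrict the working vertex set to $\An(Y)_{D[\overline{X}]}$, and how the conditional distribution assembled on line 10 interacts with the topological ordering $\varphi$. It is here that the ancestrality conditions built into the surrogate outcome query in Definition~\ref{def:so_query} — in particular $\An(W_{ij})_G \setminus W_i = \An(W_i)_G \setminus W_i$ for every $W_{ij} \in W_i$ — become indispensable, since they force the conditioning ancestors of any sub-outcome $W^*$ to agree with those of the whole outcome set and thereby make the three target forms well defined.

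Finally, for the closing claim I would verify identifiability from the set $\mathcal{I}$ of Definition~\ref{def:so_ident} after the domain indicators are dropped. Form (2) is a conditional of $\P(v)$ and is obtained by marginalization and conditioning; form (3) is obtained from the stored experimental distribution $\P(w_i \cond \doo(z^\prime), \An(w_i)_{G[\overline{Z^\prime}]} \setminus (w \cup z^\prime)) \in I_i$ by marginalizing out $W_i \setminus W^*$, where the shared-ancestor assumption again guarantees that the conditioning set is unchanged by this marginalization; and form (1) is the product of a form-(3) factor with the observational marginal $\P^*(x^\prime)$, both of which have just been shown to be computable from $\mathcal{I}$.
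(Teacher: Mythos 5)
Your overall strategy is essentially the paper's: you isolate line 6 as the unique point where a source-domain index and active experiment can enter, note that within such a subtree only lines 1, 9 and 10 can close the recursion, classify the emitted factors through the placement of transportability nodes in Definition~\ref{def:so_into_tr}, and invoke the ancestrality conditions of Definition~\ref{def:so_query}. However, your closing verification of form (3) contains a genuine error. You claim that $\P^{(i)}(w^* \cond \doo(z^\prime), x^\prime)$, with $x^\prime = \An(w^*)_{G[\overline{Z^\prime}]} \setminus (w^* \cup z^\prime)$, is obtained from the stored distribution $\P(w_i \cond \doo(z^\prime), \An(w_i)_{G[\overline{Z^\prime}]} \setminus (w_i \cup z^\prime))$ \emph{by marginalizing out} $W_i \setminus W^*$, ``the conditioning set being unchanged.'' It is not unchanged: the shared-ancestor assumption gives precisely $\An(W^*)_{G[\overline{Z^\prime}]} \setminus W^* = (\An(W_i)_{G[\overline{Z^\prime}]} \setminus W_i) \cup (W_i \setminus W^*)$, so the conditioning set of form (3) \emph{strictly contains} that of the stored distribution --- the left-out outcomes $W_i \setminus W^*$ move to the conditioning side. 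Marginalization alone yields $\P^{(i)}(w^* \cond \doo(z^\prime), \An(w_i)_{G[\overline{Z^\prime}]} \setminus (w_i \cup z^\prime))$, which is a different distribution. The correct operation, and the one the paper uses, is conditioning on $W_i \setminus W^*$ within the stored distribution, i.e.\ taking a ratio of the stored joint to an appropriate marginal of it. The same correction is needed in your account of form (1), which you describe as a form-(3) factor ``paired with'' a marginal.

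Separately, the step you yourself identify as the main obstacle --- showing that the conditioning set $d^\prime$ actually produced by lines 9 and 10 collapses to the canonical ancestral set $x^\prime$ --- is flagged but never carried out; you only say it ``requires following'' lines 2, 3 and 10. This is where the paper does the real work: it proves that before line 6 only lines 2, 3 and 4 can fire, so the local distribution entering line 6 is $\P(\An(b)_G)$ for $B$ the vertex set of a c-component of $G[V \setminus X]$, and that after line 6 followed by line 2 it has the explicit form $\P^{(i)}(\An(b)_{G[\overline{Z^\prime}]} \setminus z^\prime \cond \doo(z^\prime))$; the three terminal cases (lines 1, 9, 10) are then analyzed against this explicit form, which is what forces every emitted factor into shapes (1)--(3). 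Without this bookkeeping, your classification of the emitted factors remains an assertion rather than a proof, so as written the proposal has the right architecture but does not yet establish the lemma.
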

We defer the proof of Lemma~\ref{lem:dist_property} to Appendix A. We are ready to prove Theorem~\ref{thm:sufficient}

\begin{proof}[Proof of Theorem~\ref{thm:sufficient}] Assume that $p = \P^*(y \cond \doo(x))$ is transportable from $\Pi$ to $\pi^*$ in $\mathcal{D}$ with information $\mathcal{I}^2$ by applying $\text{TRSO}(y,x,\P^*(v),\emptyset,0,G,\mathcal{D},\varphi)$. Let $\delta_d = (\mathcal{R}, \mathcal{P})$ be a do-calculus sequence for $p$ of the form given by Lemma~\ref{lem:do_sequence_exists}. This sequence is valid since the algorithm is sound by Theorem~\ref{thm:soundness}. We can categorize the do-calculus steps into two distinct types: those that do not modify the transportability nodes present in the expression, and those that do. In other words, if $T$ is the set of all transportability nodes over the domains $\mathcal{D}$ and $D^*$, the first category contains steps $R_i = (Y_i,Z_i,X_i,W_i,r_i)$ such that $Z_i \cap T = \emptyset$. By Corollary~\ref{cor:cond_indep}, the conditional independence statements in a transportability diagram involving sets $Z_i$ of this type are also valid in a corresponding graph where transportability nodes have been removed. This means that if $(Y_i \indep Z_i \cond X, W)_{D_j}$ then $(Y_i \indep Z_i \cond X, W \setminus T)_{D_j[V \setminus T]}$. This allows us to construct a new do-calculus sequence as follows: For any $R_i \in \mathcal{R}$ of the form $(Y_i,Z_i,X_i,W_i,r_i)$ with $Z_i \cap T = \emptyset$ we let $R^\prime_i = (X_i,Y_i,Z_i,W_i \setminus S,r_i)$. If $Z_i \cap T \subseteq T$, we let $R_i^\prime = \emptyset$. For any $R_i$ in the index set ${m,c,r}$ we let $R_i^\prime = R_i$. Let the sequence $\mathcal{R^\prime}$ now consists of those $R_i^\prime$ that are non-empty. The sequence $\mathcal{P}^\prime$ of distributions is constructed from $q = \P(y \cond \doo(x))$ through the sequence of manipulations described by $\mathcal{R}^\prime$. We apply Lemma~\ref{lem:dist_property} for each term of the form $\P(c \cond \doo(z), d^\prime)$ in the resulting formula for $q$ such that there exists no pair $(Z_i,W_i) \in \mathcal{S}$ with $W_i = C$. This means that additional manipulation steps $R_i^\prime$ and distributions $p^\prime_i$ are added that correspond to the transformation of the distribution on the left-hand side to the distribution on the right-hand side in one of the conditions of Lemma~\ref{lem:dist_property}.

It remains to show that every term in this resulting formula for $q$ is included in the information set $\mathcal{I}_1$ or can be computed from it without do-calculus. Then $\delta_q = (\mathcal{R}^\prime,\mathcal{P}^\prime)$ gives a do-calculus sequence for $q$. Let $q_m$ be the last element of the sequence $\mathcal{P}^\prime$. Any term in the transport formula $p_n$ that involves the target domain is unaffected by do-operators since no variable is available for experimentation in the target domain by Definition~\ref{def:so_into_tr}. Therefore, the corresponding term in $q_m$ can be obtained from $\mathcal{I}_1$ since this information set includes the full observed probability distribution $\P(v)$. Any term in $p_n$ that involves a source domain is necessarily affected by a do-operator, since the term would otherwise be identified from the target domain directly. Since Lemma~\ref{lem:dist_property} has already been applied, all such terms take the form $\P^{(i)}(c \cond \doo(z^\prime), d^\prime)$. Lemma~\ref{lem:dist_property} also guarantees, that the corresponding term $\P(c \cond \doo(z^\prime), d^\prime)$ in $q_m$ is computable from the information set $\mathcal{I}_1$.
\end{proof}

The proof of Theorem~\ref{thm:sufficient} provides a construction of a do-calculus sequence for a surrogate outcome identifiable causal effect through the query transformation. In practice, we do not have to retrace the entire derivation to obtain the identifying expression. It is enough to apply Lemma~\ref{lem:dist_property} to each relevant term in the resulting expression and them replace the terms in the transport formula with their respective counterparts from the information set of the surrogate outcome query. Appendix B contains examples on this process.
The following corollary describes the process of obtaining an expression for a surrogate outcome identifiable causal effect using the query transformation.

\begin{corollary} \label{cor:replacement}
Let $(X, Y, \mathcal{D}, G, \Pi, \pi^*, \mathcal{Z}, \emptyset )$ be the query transformation of a surrogate outcome query $(X, Y, G, \mathcal{S} )$ and suppose that there exists a transport formula $p_t$ for $\P^*(y \cond \doo(x))$ given by $\mathrm{TRSO}(y,x,\P^*(v),\emptyset,0,G,\mathcal{D},\varphi)$. Then $\P^*(y \cond \doo(x))$ is surrogate outcome identifiable and its expression is obtained from $p_t$ by manipulating every term in $p_t$ of the form $\P^{(i)}(c \cond \doo(z_i), d^\prime)$ in accordance to Lemma~\ref{lem:dist_property} and by omitting the domain indicators.
\end{corollary}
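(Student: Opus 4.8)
The plan is to obtain the corollary as a direct packaging of Theorem~\ref{thm:sufficient} together with the construction carried out in its proof, so that the only genuine verification is that the stated syntactic recipe reproduces the identifying expression built there. First I would apply Theorem~\ref{thm:sufficient}: the transport formula $p_t$ is by hypothesis the output of $\mathrm{TRSO}(y,x,\P^*(v),\emptyset,0,G,\mathcal{D},\varphi)$, so the algorithm succeeds and the theorem immediately yields that $\P^*(y \cond \doo(x))$ is surrogate outcome identifiable from the information set $\mathcal{I}$ of Definition~\ref{def:so_ident}. This disposes of the identifiability claim and leaves only the description of the identifying expression.

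For the expression, I would partition the terms of $p_t$ by the domain they reference. Since $Z^* = \emptyset$ in the query transformation, no variable may be intervened on in $\pi^*$, so every term carrying the target indicator is free of the $\doo$-operator and is a conditional of the observed distribution; such a term is recovered from $\P(v) \in \mathcal{I}$ merely by deleting the indicator. Every remaining term references a source domain, and because it could otherwise have been read off the target domain directly it must involve a $\doo$-operator; hence it has the form $\P^{(i)}(c \cond \doo(z_i), d^\prime)$ to which Lemma~\ref{lem:dist_property} applies. That lemma rewrites each such term into one of its three admissible forms and guarantees that the corresponding indicator-free term is computable from $\mathcal{I}$. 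Thus applying Lemma~\ref{lem:dist_property} to the source terms and omitting all domain indicators leaves every resulting factor computable from $\mathcal{I}$ without do-calculus.

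It then remains to confirm that this term-by-term recipe yields exactly the final distribution $q_m$ of the sequence $\mathcal{P}^\prime$ constructed in the proof of Theorem~\ref{thm:sufficient}. I would argue this by recalling that the construction there begins with a do-calculus sequence $\delta_d$ for $p$ whose endpoint is precisely the TRSO transport formula $p_t$, deletes the steps manipulating transportability nodes, restricts the surviving conditional-independence statements to $G$ via Corollary~\ref{cor:cond_indep}, and then appends exactly the rewrites dictated by Lemma~\ref{lem:dist_property}. Consequently $q_m$ differs from $p_t$ only by those rewrites and by the removal of the indicators, which is the stated procedure. The main obstacle I anticipate is the bookkeeping that omitting the domain indicators commutes with the Lemma~\ref{lem:dist_property} rewrites and leaves untouched every probabilistic operation (marginalization, conditioning, chain rule) appearing in $p_t$; here the d-separation invariance of Lemma~\ref{lem:d_sep_tr} and Corollary~\ref{cor:cond_indep} does the real work, since it ensures that each do-calculus step valid in a diagram $D^{(i)}$ stays valid once the transportability nodes are dropped and the shared graph $G$ is used throughout.
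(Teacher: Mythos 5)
Your proposal is correct and follows essentially the same route as the paper, whose proof of Corollary~\ref{cor:replacement} is simply to cite the construction of the sequence $\delta_q$ in the proof of Theorem~\ref{thm:sufficient}; your argument unpacks that construction (target-domain terms are do-free and read off $\P(v)$, source-domain terms carry a do-operator and are handled by Lemma~\ref{lem:dist_property}) exactly as the theorem's proof does. No gap to report.
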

\begin{proof}
The result is a direct consequence of the construction for $\delta_q$ in the proof for Theorem~\ref{thm:sufficient}.
\end{proof}

We illustrate the application of TRSO and Corollary~\ref{cor:replacement} through an example. We use surrogate outcomes to identify $\P(y \cond \doo(x))$ in graph $G$ of Fig.~\ref{fig:theorem_example_main}(\subref{fig:theorem_exampleA}) from $\P(v)$ and $\P(w \cond \doo(x),a_1,a_2,b_1,b_2)$. By Definition~\ref{def:so_into_tr}, transportability nodes are added for $\De(X)_G \setminus \{W\} = \{X, B_1, B_2, Y\}$ and for the vertices in the same c-component as $W$ that are not ancestors of $W$ in $G[\overline{X}]$. The vertex $A_1$ is in the same c-component as $W$, but since it is still an ancestor of $W$ when edges incoming to $X$ are removed, no transportability node is added for it.

The transportability diagram of the corresponding query transformation is depicted in Fig.~\ref{fig:theorem_example_main}(\subref{fig:theorem_exampleB}). 

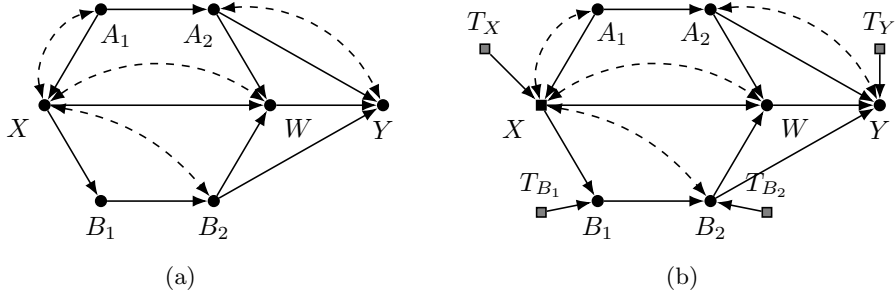
\begin{figure}[ht]
  \begin{subfigure}[t]{0.40\textwidth}
  \centering
  \begin{tikzpicture}[scale=1.5]
    \node [dot = {0}{0}{X}{below left}] at (0,0) {};
    \node [dot = {0}{0}{W}{below right}] at (2,0) {};
    \node [dot = {0}{0}{Y}{below}] at (3,0) {};
    \node [dot = {0.2}{0}{A_1}{below}] at (0.5,0.85) {};
    \node [dot = {-0.2}{0}{A_2}{below}] at (1.5,0.85) {};
    \node [dot = {0}{0}{B_1}{below}] at (0.5,-0.85) {};
    \node [dot = {0}{0}{B_2}{below}] at (1.5,-0.85) {};

    \draw [->] (X) -- (B_1);
    \draw [->] (X) -- (W);
    \draw [->] (W) -- (Y);
    \draw [->] (B_1) -- (B_2);
    \draw [->] (B_2) -- (W);
    \draw [->] (B_2) -- (Y);
    \draw [->] (A_1) -- (X);
    \draw [->] (A_1) -- (A_2);
    \draw [->] (A_2) -- (W);
    \draw [->] (A_2) -- (Y);

    \draw [<->,dashed] (X) to [bend left=45] (A_1);
    \draw [<->,dashed] (X) to [bend left=35] (W);
    \draw [<->,dashed] (X) to [bend left=20] (B_2);
    \draw [<->,dashed] (A_2) to [bend left=35] (Y);
  \end{tikzpicture}
  \caption{}
  \label{fig:theorem_exampleA}
  \end{subfigure}
  \hfill
  \begin{subfigure}[t]{0.50\textwidth}
  \centering
  \begin{tikzpicture}[scale=1.5]
    \node [intv = {0}{0}{X}{below left}] at (0,0) {};
    \node [dot = {0}{0}{W}{below right}] at (2,0) {};
    \node [dot = {0}{0}{Y}{below}] at (3,0) {};
    \node [dot = {-0.2}{0}{A_1}{below right}] at (0.5,0.85) {};
    \node [dot = {0.2}{0}{A_2}{below left}] at (1.5,0.85) {};
    \node [dot = {0}{0}{B_1}{below}] at (0.5,-0.85) {};
    \node [dot = {0}{0}{B_2}{below}] at (1.5,-0.85) {};
    \node [tr = {0}{0}{T_{B_1}}{above}] at (0,-0.95) {};
    \node [tr = {0}{0}{T_{B_2}}{above}] at (2,-0.95) {};
    \node [tr = {0}{0}{T_Y}{above}] at (3,0.5) {};
    \node [tr = {0}{0}{T_X}{above}] at (-0.5,0.5) {};

    \draw [->] (T_{B_1}) -- (B_1);
    \draw [->] (T_{B_2}) -- (B_2);
    \draw [->] (T_X) -- (X);
    \draw [->] (T_Y) -- (Y);

    \draw [->] (X) -- (B_1);
    \draw [->] (X) -- (W);
    \draw [->] (W) -- (Y);
    \draw [->] (B_1) -- (B_2);
    \draw [->] (B_2) -- (W);
    \draw [->] (B_2) -- (Y);
    \draw [->] (A_1) -- (X);
    \draw [->] (A_1) -- (A_2);
    \draw [->] (A_2) -- (W);
    \draw [->] (A_2) -- (Y);

    \draw [<->,dashed] (X) to [bend left=45] (A_1);
    \draw [<->,dashed] (X) to [bend left=35] (W);
    \draw [<->,dashed] (X) to [bend left=20] (B_2);
    \draw [<->,dashed] (A_2) to [bend left=35] (Y);
  \end{tikzpicture}
  \caption{}
  \label{fig:theorem_exampleB}
  \end{subfigure}
  \caption{Graphs for illustrating the application of TRSO and Corollary~\ref{cor:replacement}. (a) Graph $G$ corresponding to the surrogate outcome identifiability problem and the target domain of its query transformation. (b) Transportability diagram obtained from a query transformation corresponding to the domain where intervention on $X$ is available.}
  \label{fig:theorem_example_main}
\end{figure}
\noindent
The application of $\text{TRSO}$ succeeds in transporting the causal effect and produces the following expression for $\P^*(y \cond \doo(x))$
\begin{align*}
&\sum_{a_1,a_2,b_1,b_2,w}\!\! \P^*(y \cond a_1,x,a_2,b_1,b_2,w)\P^{(1)}(w \cond \doo(x), a_1,a_2,b_1,b_2)\P^*(a_2 \cond a_1,x) \\
&\quad \,\times\,\P^*(b_1 \cond a_1,x)\P^*(a_1)\sum_{a_1^\prime,x^\prime} \P^*(b_2 \cond a_1^\prime,x^\prime,b_1)\P^*(x^\prime \cond a_1^\prime)\P^*(a_1^\prime).
\intertext{\noindent
In this case we obtain the expression for the surrogate outcome identifiable causal effect $\P(y \cond \doo(x))$ by simply omitting all domain indicators from the expression for $\P^*(y \cond \doo(x))$ as licensed by Corollary~\ref{cor:replacement}
}
&\sum_{a_1,a_2,b_1,b_2,w}\!\! \P(y \cond a_1,x,a_2,b_1,b_2,w)\P(w \cond \doo(x), a_1,a_2,b_1,b_2)\P(a_2 \cond a_1,x) \\
&\quad \,\times\, \P(b_1 \cond a_1,x)\P(a_1)\sum_{a_1^\prime,x^\prime} \P(b_2 \cond a_1^\prime,x^\prime,b_1)\P(x^\prime \cond a_1^\prime)\P(a_1^\prime).
\end{align*}

\section{Discussion}
We take advantage of transportability by depicting experimental data as distinct domains and by using transportability nodes to prevent certain variables from being observed under an intervention. The positioning of the transportability nodes results in the need for Lemma~\ref{lem:dist_property} to parse the output of TRSO. It may be possible to consider other variations of the query transformation, where transportability nodes are omitted from additional vertices based on d-separation in the graph or by some other criteria. In the extreme case we could operate without any connection to transportability by omitting transportability nodes and relying on do-calculus entirely, but this approach can quickly become intractable for larger graphs. Our formulation avoids this, and the output can be directly transformed into a valid formula for a surrogate outcome identifiable causal effect. The query transformation has practical importance because an implementation of TRSO is readily available in the R package \emph{causaleffect}.

Transportability via the query transformation of Definition~\ref{def:so_into_tr} does not provide a complete characterization of surrogate outcome identifiability. As an example, we consider the graph of Fig.~\ref{fig:not_complete} and identifiability of $\P(y_1,y_2 \cond \doo(x_1,x_2))$ from $\P(v)$, $\P(y_2,z \cond \doo(x_2), x_1, y_1)$ and $\P(y_1 \cond \doo(x_2))$.

\begin{figure}[H]
  \centering
  \begin{tikzpicture}[scale=1.7]
    \node [dot = {0}{0}{X_1}{above}] at (1,0.75) {};
    \node [dot = {0}{0}{X_2}{above}] at (1,1.5) {};
    \node [dot = {0}{0}{Y_1}{below left}] at (0,0) {};
    \node [dot = {0}{0}{Y_2}{below right}] at (3,0) {};
    \node [dot = {0}{0}{Z}{below}] at (2,0) {};
    
    \draw [->] (X_1) -- (Z);
    \draw [->] (Z) -- (Y_2);
    \draw [->] (Y_1) -- (X_1);
    \draw [->] (Y_1) -- (Z);
    \draw [->] (X_2) -- (Y_1);
    \draw [->] (X_2) -- (Z);
    
    \draw [->] (X_2) to [bend left=35] (Y_2);
    \draw [<->,dashed] (X_2) to [bend right=45] (Y_1);
    \draw [<->,dashed] (Y_1) to [bend right=45] (Z);
    \draw [<->,dashed] (Y_2) to [bend left=45] (Y_1);
  \end{tikzpicture}
  \caption{An example where the query transformation does not produce a transportable causal effect, but the causal effect of interest is surrogate outcome identifiable.}
  \label{fig:not_complete}
  \end{figure}
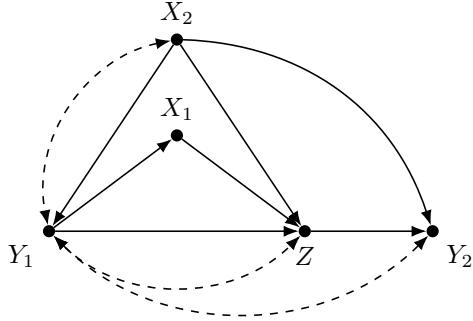
\noindent
We derive the effect using do-calculus:
\begin{align*}
P(y_1,y_2 \cond \doo(x_1,x_2)) &= \P(y_1 \cond \doo(x_1,x_2)) \P(y_2 \cond \doo(x_1,x_2),y_1) \\
                               &= \P(y_1 \cond \doo(x_2))  \P(y_2 \cond \doo(x_2),x_1,y_1) \\
                               &= \P(y_1 \cond \doo(x_2)) \sum_{z} \P(y_2,z \cond \doo(x_2),x_1,y_1),
\end{align*}
where the second equality follows from rules three and two by $(Y_1 \indep X_1)_{G[\overline{X_1},\overline{X_2}]}$ and $(Y_2 \indep X_1 \cond Y_1)_{G[\overline{X_2},\underline{X_1}]}$. It is easy to verify that the query transformation of this problem is not transportable using TRSO or the original transportability algorithm in \citep{bareinboim2014:completeness}. %This does not rule out the possibility that some other configuration of transportability nodes could provide a complete characterization of surrogate outcome identifiability through a similar construction as the query transformation.

We performed a simple simulation study to assess the strength of TRSO. We generated 10000 instances of random graphs with 6 vertices and random sets of surrogate outcomes. For every instance, the causal effect  $p(y_1,y_2 \cond \doo(x_1,x_2))$ was verified to be non-identifiable from $P(v)$ alone using the ID algorithm. We used a simple exhaustive breadth-first forwards search that implements the rules of do-calculus and standard probability manipulations to confirm surrogate outcome identifiability or non-identifiability for each instance. Out of the 10000 instances 1514 were found to be surrogate outcome identifiable by the search and 1332 by TRSO which corresponds to 88 \% coverage. Based on this result, TRSO seems to be able to identify most of the surrogate outcome identifiable instances.

\section*{Conflict of interest statement}
The authors declare that they have no conflict of interest.

\section*{Acknowledgments} This work belongs to the thematic research area ``Decision analytics utilizing causal models and multiobjective optimization'' (DEMO) supported by Academy of Finland (grant number 311877). We thank the anonymous reviewers for their comments which helped to substantially improve this paper.

\section*{Appendix A}
\begin{proof}[Proof of Lemma~\ref{lem:dist_property}]
Let $G$ denote the graph of the original recursive call and assume without loss of generality that $G = G[An(Y)_G]$ for clarity. Let $D$ denote the graph of the current recursion stage. Consider a term of the form $\P^{(i)}(c \cond \doo(z^\prime), d^\prime)$ that appears in the output formula. Only line 6 of TRSO introduces permanent interventions into the expression by using the available experiments ($I = Z_i \cap X$), so it must have been triggered and it cannot be triggered again in the same recursive branch since we check that $I = \emptyset$ on this line. Before triggering line 6, only a combination lines 2, 3 and 4 can be triggered, corresponding to removal of non-ancestors of $Y$, introducing additional interventions via the third rule of do-calculus, and performing the c-component factorization, respectively. It follows that after these steps, the local distribution $P$ of one the recursive calls after triggering lines 2 and 4 in sequence is of the form $\P(\An(b)_G)$ where $B$ is the vertex set of some c-component of $G[V \setminus X]$

Line 6 is triggered next, activating an available experiment which means that there are no transportability nodes incoming to $B$ in $G$, since $(B \indep T_i \cond X)_D^{(i)}$. After this call and application of line 2, the local distribution is now of the form 
\[
\P^{(i)}(\An(b)_{G[\overline{Z^\prime}]} \setminus z^\prime \cond \doo(z^\prime)),
\]
where $Z^\prime = X \cap Z_i$ is the now active intervention. Since the sets $X$ and $Y$ local to this recursive call partition $V$ and non-ancestors have been removed, it is only possible to trigger line 1, 9 or 10 next, since we know that this call does not fail. We proceed to prove each case.

\textbf{Case of line 1.} Then the intervention set $X$ is empty and the effect is identified as
\[
\sum_{V \setminus B} \P^{(i)}(\An(b)_{G[\overline{Z^\prime}]} \setminus z^\prime \cond \doo(z^\prime)) = \P^{(i)}(b \cond \doo(z^\prime)).
\]
However, since $X$ and $Y$ local to this call still partition $V$ and $V = \An(B)_{G[\overline{Z}]} \setminus Z^\prime$ and $X$ is an empty set, we have that
\[
\P^{(i)}(b \cond \doo(z^\prime)) = \P^{(i)}(\An(b)_{G[\overline{Z^\prime}]} \setminus z^\prime \cond \doo(z^\prime)),
\]
meaning that $B$ contains its own ancestors in $G[\overline{Z^\prime}]$. If there exists a intervention--outcome pair $(Z_i,W_i) \in \mathcal{S}$ such that $W_i = B$, then the set $\An(w_i)_{G[\overline{Z^\prime}]} \setminus (w_i \cup z^\prime)$ is empty and we have that
\[
\P^{(i)}(b \cond \doo(z^\prime)) = \P^{(i)}(w_i \cond \doo(z^\prime), \An(w_i)_{G[\overline{Z^\prime}]} \setminus (w_i \cup z^\prime)),
\]
which corresponds to (3). When the domain indicator is omitted from this term, it is clearly identifiable from $\mathcal{I}$.

If instead there exists a intervention--outcome pair $(Z_i,W_i) \in \mathcal{S}$ such that for a subset $W^* \subset W_i$ it holds that $W^* \subset B$, then
\[
\P^{(i)}(b \cond \doo(z^\prime)) = \P^{(i)}(w^* \cond \doo(z^\prime), b \setminus w^*) \P^{(i)}(b \setminus w^* \cond \doo(z^\prime)).
\]
Since line 6 was triggered previously, $B$ cannot have any incoming transportability nodes, which means that $B \setminus W^*$ must be an ancestor of $W^*$ in $G[\overline{Z_i}]$ but not a descendant of $Z_i$ according to the construction of the transportability diagrams of a query transformation in Definition~\ref{def:so_into_tr}. This means that $B \setminus W^*$ is an ancestor of $W^*$ also in $G[\overline{Z^\prime}]$ and we have that
\[
  \P^{(i)}(w^* \cond \doo(z^\prime), b \setminus w^*) \P^{(i)}(b \setminus w^* \cond \doo(z^\prime)) = 
  \P^{(i)}(w^* \cond \doo(z^\prime), b \setminus w^*) \P^{(i)}(b \setminus w^*),
\]
which follows from the third rule of do-calculus since we have established that $B \setminus W_i^*$ must be a non-descendant of $Z^\prime$. Furthermore, since $B \setminus W^*$ can only contain ancestors of $W^*$ in $G[\overline Z]$ it follows that
\begin{equation} \label{eq:cond_identify}
  \P^{(i)}(w^* \cond \doo(z^\prime), b \setminus w^*) = \P^{(i)}(w^* \cond \doo(z^\prime), \An(w^*)_{G[\overline{Z^\prime}]} \setminus (w^* \cup z^\prime)).
\end{equation}
Now, we obtain from Definition~\ref{def:so_query} that 
\begin{align*}
\An(W^*)_{G[\overline{Z^\prime}]} \setminus W^* &= (\An(W_i)_{G[\overline{Z^\prime}]} \setminus W_i) \cup (\An(W^*)_{G[\overline{Z^\prime}]} \cap(W_i \setminus W^*)) \\
                                                &= (\An(W_i)_{G[\overline{Z^\prime}]} \setminus W_i) \cup (W_i \setminus W^*).
\end{align*}
This means that the right-hand side of \eqref{eq:cond_identify} can be obtained via conditioning by writing
\[
\frac{\P^{(i)}(w_i \cond \doo(z^\prime), \An(w_i)_{G[\overline{Z^\prime}]} \setminus (w_i \cup z^\prime))}{\sum_{W_i \setminus W^*} \P^{(i)}(w_i \cond \doo(z^\prime), \An(w_i)_{G[\overline{Z^\prime}]} \setminus (w_i \cup z^\prime))},
\]
which is identifiable from $\mathcal{I}$ after omitting domain indicators, which means that \eqref{eq:cond_identify} is identifiable as well. Therefore this case with $W^* \subset W_i$ corresponds to (1)

If instead there is no such $W_i^*$ we have
\[
  \P^{(i)}(b \cond \doo(z^\prime)) = \P^{(i)}(b),
\]
since now it must be the case that every member of $B$ is a non-descendant of $Z^\prime$ by Definition~\ref{def:so_into_tr}. This corresponds to option (2), and since $P(v)$ is always available, the term is identifiable from $\mathcal{I}$ after the omission of domain indicators. 

\textbf{Case of line 9.} In this case, the effect is identified as a conditional distribution 
\begin{align*}
&\sum_{B \setminus Y} \prod_{V_j \in B} \frac{\sum_{V \setminus V_\varphi^{(j)}}\P^{(i)}(\An(b)_{G[\overline{Z^\prime}]} \setminus z^\prime \cond \doo(z^\prime))}{\sum_{V \setminus V_\varphi^{(j-1)}}\P^{(i)}(\An(b)_{G[\overline{Z^\prime}]} \setminus z^\prime \cond \doo(z^\prime))} \\
=& \sum_{B \setminus Y} \prod_{V_j \in B} \P^{(i)}(v_j \cond \doo(z^\prime), \An(v_j)_{G[\overline{Z^\prime}]} \setminus (v_j \cup z^\prime)).
\end{align*}
As in the case of line 1, if there exists a $W_i^* \subset B$ such that $W_i^* \subset W_i$ and $(Z_i,W_i) \in \mathcal{S}$, then the product inside the sum takes the form
\begin{align} \label{eq:productform}
 & \prod_{V_j \in B} \P^{(i)}(v_j \cond \doo(z^\prime), \An(v_j)_{G[\overline{Z^\prime}]} \setminus (v_j \cup z^\prime)) \nonumber \\
=& \prod_{V_j \in W_i^*} \P^{(i)}(v_j \cond \doo(z^\prime), \An(v_j)_{G[\overline{Z^\prime}]} \setminus (v_j \cup z^\prime)) \nonumber \\
 & \quad \times\,\prod_{V_j \in (B \setminus W_i^*)} \P^{(i)}(v_j \cond \An(v_j)_{G[\overline{Z^\prime}]} \setminus (v_j \cup z^\prime)).
\end{align}
Here, individual terms of the form 
\[
\P^{(i)}(v_j \cond \doo(z^\prime), \An(v_j)_{G[\overline{Z^\prime}]} \setminus (v_j \cup z^\prime))
\]
are obtained via conditioning exactly as the right-hand side of \eqref{eq:cond_identify} by applying the same logic to $V_j$ instead of $W^*$ itself, which is valid for vertices $V_j \in W^*$.

The terms in the first product of \eqref{eq:productform} correspond to (3) and the terms in the second product correspond to (2). The equality again follows from the third rule of do-calculus that renders $V_j \in B \setminus W_i^*$ unaffected by the intervention on $Z^\prime$. If no suitable $W_i^*$ exists the product is simply
\[
\prod_{V_j \in B} \!\P^{(i)}(v_j \cond \doo(z^\prime), \An(v_j)_{D[\overline{Z^\prime}]} \setminus (v_j \cup z^\prime)) = \!\! \prod_{V_j \in B} \!\P^{(i)}(v_j \cond \An(v_j)_{D[\overline{Z^\prime}]} \setminus (v_j \cup z^\prime)),
\]
where the terms in the product correspond to (3) and the third rule of do-calculus is used again. These product terms are directly identifiable from $P(v)$ after omitting domain indicators.

\textbf{Case of line 10.} If line 10 was triggered with $I = \emptyset$ we are done, since the set of available experiments was set to $\emptyset$. If it was triggered with $I \neq \emptyset$, then we know that there are no incoming transportability nodes into the c-component consisting of the vertices $C^\prime$. It follows that the distribution $P$ of the next recursive call takes the form of \eqref{eq:productform} because the distribution of this call is $\prod_{v_i \in c} P(v_i \cond V_\varphi^{(i-1)} \cap C^\prime, v_\varphi^{(i-1)} \setminus c^\prime)$ and since the operations carried out on this distribution afterwards in the recursion can be represented by marginalization and conditioning by noting that on line 1, we return with $\sum_{v \setminus y} P$, on line 2 the recursive call contains $\sum_{V \setminus \An(Y)_D} P$, line 9 returns with $\sum_{C \setminus Y} \prod_{V_i \in C} (\sum_{V \setminus \varphi^{(i)}} P)/(\sum_{V \setminus V_\varphi^{(i-1)}} P)$, the recursive call on line 10 contains $\prod_{v_i \in c} P(v_i \cond V_\varphi^{(i-1)} \cap C^\prime, v_\varphi^{(i-1)} \setminus c^\prime)$ and $P$ remains unchanged in other recursive calls.

All the cases have been covered and the claim follows.
\end{proof}

\section*{Appendix B} This appendix contains examples on the construction of the do-calculus sequence in the proof of Theorem~\ref{thm:sufficient}. We begin with an example where we use surrogate outcomes to identify $p = \P(y \cond \doo(x))$ in the graph $G$ of Fig.~\ref{fig:theorem_example}(\subref{fig:theorem_exampleG}) from $\P(v)$ and $\P(z \cond \doo(x))$.
\begin{figure}[H]
  \begin{subfigure}[t]{0.5\textwidth}
  \centering
  \begin{tikzpicture}[scale=1.7]
    \node [dot = {0}{0}{W}{below}] at (0,0) {};
    \node [dot = {0}{0}{Z}{below}] at (1,0) {};
    \node [dot = {0}{0}{Y}{below}] at (2,0) {};
    \node [dot = {0}{0}{X}{above}] at (0.5,0.5) {};
    
    \draw [->] (W) -- (X);
    \draw [->] (W) -- (Z);
    \draw [->] (X) -- (Z);
    \draw [->] (Z) -- (Y);
    
    \draw [<->,dashed] (X) to [bend left=45] (Z);
  \end{tikzpicture}
  \caption{}
  \label{fig:theorem_exampleG}
  \end{subfigure}
  \begin{subfigure}[t]{0.45\textwidth}
  \centering
  \begin{tikzpicture}[scale=1.7]
    \node [dot = {0}{0}{W}{below}] at (0,0) {};
    \node [dot = {0}{0}{Z}{below}] at (1,0) {};
    \node [dot = {0}{0}{Y}{below}] at (2,0) {};
    \node [intv = {0}{0}{X}{above}] at (0.5,0.5) {};
    \node [tr = {0}{0}{T_Y}{above}] at (2,0.5) {};
    \node [tr = {0}{0}{T_X}{above}] at (0,0.5) {};
    
    \draw [->] (W) -- (X);
    \draw [->] (W) -- (Z);
    \draw [->] (X) -- (Z);
    \draw [->] (Z) -- (Y);
    \draw [->] (T_Y) -- (Y);
    \draw [->] (T_X) -- (X);
    
    \draw [<->,dashed] (X) to [bend left=45] (Z);
  \end{tikzpicture}
  \caption{}
  \label{fig:theorem_exampleD}
  \end{subfigure}
  \caption{Graphs related to the first example for illustrating Theorem 3.}
  \label{fig:theorem_example}
\end{figure}
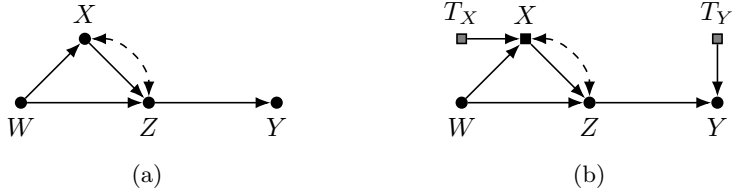
\noindent
Consider the surrogate outcome query $Q = (X,Y,G,\mathcal{S})$ where $X$ and $Y$ are the corresponding vertices of $G$ and $\mathcal{S} = \{(X,Z)\}$. The query transformation of $Q$ is $(X,Y,\{D\},G,\Pi,\pi^*,\{Z\}, \emptyset)$, where the transportability diagram $D$ depicted in Fig.~\ref{fig:theorem_example}(\subref{fig:theorem_exampleD}). By definition, transportability nodes are added for $X$ and $Y$, since $W$ is not a descendant of $X$ and $W$ is not in the same c-component as $Z$. We derive a do-calculus sequence that provides a transportability formula for the effect and construct a do-calculus sequence for the surrogate outcome identifiable causal effect 

\begin{figure}[H]
%\begin{sidewaysfigure}
  %\vspace{4cm}
  \centering
  \footnotesize
  \begin{tabularx}{\textwidth}{@{\extracolsep{\fill}}lllr}
   $i$ & $\displaystyle p_i$ & & $R_i$ \\[5pt]
    0  & $\displaystyle \P^*(y \cond \doo(x))$ & & \\
    1  & $\displaystyle \sum_{z,w} \P^*(y,z,w \cond \doo(x))$ & & $m$ \\
    2  & $\displaystyle \sum_{z,w} \P^*(y \cond \doo(x), z,w)\P^*(z,w \cond \doo(x))$ & & $r$ \\
    3  & $\displaystyle \sum_{z,w} \P^*(y \cond \doo(x), z,w)\P^*(z \cond \doo(x), w)\P^*(w \cond \doo(x))$ & & $r$ \\
    4  & $\displaystyle \sum_{z,w} \P^*(y \cond \doo(x), z,w)\P^*(z \cond \doo(x), w,t_X,t_Y)\P^*(w \cond \doo(x))$ & & $(Z,\{T_X, T_Y\},X,W,1)$ \\
    5  & $\displaystyle \sum_{z,w} \P^*(y \cond x,z,w)\P^{(1)}(z \cond \doo(x), w)\P^*(w \cond \doo(x))$ & & $(Y,X,\emptyset,\{Z,W\},2)$ \\
    6  & $\displaystyle \sum_{z,w} \P^*(y \cond x,z,w)\P^{(1)}(z \cond \doo(x), w)\P^*(w)$ & & $(W,X,\emptyset,\emptyset,3)$ \\
       & & & \\
    $i$ & $\displaystyle p^\prime_i$ & & $R^\prime_i$ \\[5pt]
    0  & $\displaystyle \P(y \cond \doo(x))$ & & \\
    1  & $\displaystyle \sum_{z,w} \P(y,z,w \cond \doo(x))$ & & $m$ \\
    2  & $\displaystyle \sum_{z,w} \P(y \cond \doo(x), z,w)\P(z,w \cond \doo(x))$ & & $r$ \\
    3  & $\displaystyle \sum_{z,w} \P(y \cond \doo(x), z,w)\P(z \cond \doo(x), w)\P(w \cond \doo(x))$ & & $r$ \\
    4  & $\displaystyle \sum_{z,w} \P(y \cond x,z,w)\P(z \cond \doo(x), w)\P(w \cond \doo(x))$ & & $(Y,X,\emptyset,\{Z,W\},2)$ \\
    5  & $\displaystyle \sum_{z,w} \P(y \cond x,z,w)\P(z \cond \doo(x), w)\P(w)$ & & $(W,X,\emptyset,\emptyset,3)$
  \end{tabularx}
  \caption{A do-calculus sequence $\delta_p$ for $p = \P^*(y \cond \doo(x))$ for the graph of Fig.~\ref{fig:theorem_example}(\subref{fig:theorem_exampleG}) and the do-calculus sequence $\delta_q$ for $q = \P(y \cond \doo(x))$ for the graph of Fig.~\ref{fig:theorem_example}(\subref{fig:theorem_exampleD}) constructed as in the proof for Theorem~\ref{thm:sufficient}}
  \label{fig:do_calc_seq}
%\end{sidewaysfigure}
\end{figure}
\noindent
When a do-calculus sequence is considered, it is implicitly assumed that whenever $R_i \in \{m,r,p\}$ it is clear from the context which term or terms in the expression are referenced by the corresponding operation. In reality, these operations are more involved, for example marginalization should describe which term is being marginalized and which variables the operations is performed over. Similarly, $R_i$ corresponding to do-calculus manipulations reference the specific terms that are being manipulated. These details are omitted from the paper for clarity, since they are not crucial for the proofs and can impede readability. Figure~\ref{fig:do_calc_seq} shows the do-calculus sequences for the transportability query and the surrogate outcome query. The step transforming $p_3$ into $p_4$ is omitted from the do-calculus sequence for the surrogate outcome query according to the construction in Theorem 3. 

A second example highlights the omission of operations involving transportability diagrams. We use surrogate outcomes to identify $p = \P(y \cond \doo(x))$ in the graph $G$ of Fig.~\ref{fig:theorem_example2}(\subref{fig:theorem_exampleG2}) from $\P(v)$ and $\P(y \cond \doo(x),w_1,w_2)$.

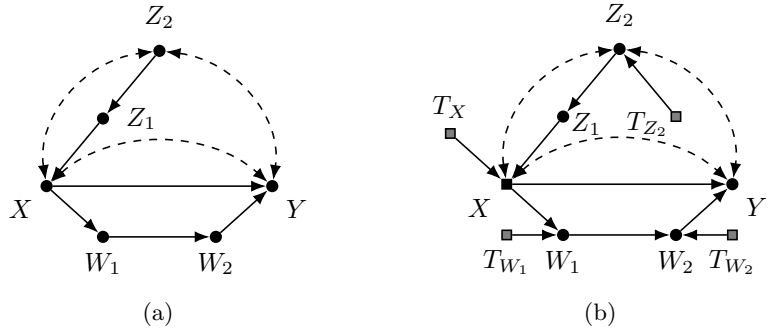
\begin{figure}[H]
  \begin{subfigure}[t]{0.5\textwidth}
  \centering
  \begin{tikzpicture}[yscale=0.9]
    \node [dot = {0}{0}{X}{below left}] at (0,0) {};
    \node [dot = {0.1}{0}{Z_1}{right}] at (0.75,1) {};
    \node [dot = {0}{0.1}{Z_2}{above}] at (1.5,2) {};
    \node [dot = {0}{0}{Y}{below right}] at (3,0) {};
    \node [dot = {0}{0}{W_1}{below}] at (0.75,-0.75) {};
    \node [dot = {0}{0}{W_2}{below}] at (2.25,-0.75) {};

    \draw [->] (X)   -- (Y);
    \draw [->] (Z_2) -- (Z_1);
    \draw [->] (Z_1) -- (X);
    \draw [->] (X) -- (W_1);
    \draw [->] (W_1) -- (W_2);
    \draw [->] (W_2) -- (Y);

    \draw [<->,dashed] (X)   to [bend left=45]  (Z_2);
    \draw [<->,dashed] (Z_2) to [bend left=45]  (Y);
    \draw [<->,dashed] (X) to [bend left=45]  (Y);
  \end{tikzpicture}
  \caption{}
  \label{fig:theorem_exampleG2}
  \end{subfigure}
  \begin{subfigure}[t]{0.45\textwidth}
  \centering
  \begin{tikzpicture}[yscale=0.9]
    \node [intv = {0}{0}{X}{below left}] at (0,0) {};
    \node [dot = {-0.1}{-0.1}{Z_1}{right}] at (0.75,1) {};
    \node [dot = {0}{0.1}{Z_2}{above}] at (1.5,2) {};
    \node [dot = {0}{0}{Y}{below right}] at (3,0) {};
    \node [dot = {0}{0}{W_1}{below}] at (0.75,-0.75) {};
    \node [dot = {0}{0}{W_2}{below}] at (2.25,-0.75) {};

    \node [tr = {0}{0}{T_{W_1}}{below}] at (0,-0.75) {};
    \node [tr = {0}{0}{T_{W_2}}{below}] at (3,-0.75) {};
    \node [tr = {0.1}{-0.1}{T_{Z_2}}{left}] at (2.25,1) {};
    \node [tr = {0}{0}{T_X}{above}] at (-0.75,0.75) {};

    \draw [->] (X)   -- (Y);
    \draw [->] (Z_2) -- (Z_1);
    \draw [->] (Z_1) -- (X);
    \draw [->] (X) -- (W_1);
    \draw [->] (W_1) -- (W_2);
    \draw [->] (W_2) -- (Y);

    \draw [->] (T_{W_1}) -- (W_1);
    \draw [->] (T_{W_2}) -- (W_2);
    \draw [->] (T_{Z_2}) -- (Z_2);
    \draw [->] (T_X) -- (X);

    \draw [<->,dashed] (X)   to [bend left=45]  (Z_2);
    \draw [<->,dashed] (Z_2) to [bend left=45]  (Y);
    \draw [<->,dashed] (X) to [bend left=45]  (Y);
  \end{tikzpicture}
  \caption{}
  \label{fig:theorem_exampleD2}
  \end{subfigure}
  \caption{Graphs related to the second example for illustrating Theorem 3.}
  \label{fig:theorem_example2}
\end{figure}
\noindent

From the derivation in Fig.~\ref{fig:do_calc_seq2} we can see that in order to add the necessary transportability nodes, we first have to manipulate the interventions present in the expression. We add the interventions for $Z_1$ and $Z_2$, which are later removed when they are no longer needed. These operations are reflected in the do-calculus sequence for the surrogate outcome query, even though adding the interventions is not necessary in this case. Despite of this fact, the sequence is valid.

\begin{figure}[H]
\vspace{5cm}
\begin{sideways}
  \footnotesize
  \begin{tabularx}{\textwidth}{@{\extracolsep{\fill}}lllr}
   $i$ & $\displaystyle p_i$ & & $R_i$ \\[5pt]
    0  & $\displaystyle \P^*(y \cond \doo(x))$ & & \\
    1  & $\displaystyle \P^*(y \cond \doo(x,z_1,z_2))$ & & $(Y,\{Z_1,Z_2\},X,\emptyset,3)$ \\
    2  & $\displaystyle \sum_{w_1,w_2} \P^*(y,w_1,w_2 \cond \doo(x,z_1,z_2))$ & & $m$\\
    3  & $\displaystyle \sum_{w_1,w_2} \P^*(y \cond \doo(x,z_1,z_2), w_1,w_2) \P(w_1,w_2\cond \doo(x,z_1,z_2))$ & & $r$\\
    4  & $\displaystyle \sum_{w_1,w_2} \P^*(y \cond \doo(x,z_1,z_2), w_1,w_2,t_{W_1},t_{W_2},t_X,t_{Z_2}) \P(w_1,w_2\cond \doo(x,z_1,z_2))$ & & $(Y, \{T_{W_1},T_{W_2},T_X,T_{Z_2}\}, \{X,Z_1,Z_2\},\emptyset,1)$ \\
    5  & $\displaystyle \sum_{w_1,w_2} \P^{(i)}(y \cond \doo(x,z_1,z_2), w_1,w_2) \P(w_1,w_2\cond \doo(x))$ & & $(\{W_1,W_2\}, \{Z_1,Z_2\}, X,\emptyset,3)$ \\
    6  & $\displaystyle \sum_{w_1,w_2} \P^{(i)}(y \cond \doo(x,z_1,z_2), w_1,w_2) \P(w_1,w_2\cond x)$ & & $(\{W_1,W_2\}, X, \emptyset,\emptyset,2)$ \\
    7  & $\displaystyle \sum_{w_1,w_2} \P^{(i)}(y \cond \doo(x), w_1,w_2) \P(w_1,w_2\cond x)$ & & $(Y, \{Z_1,Z_2\}, \{X\},\emptyset, 3)$ \\
       & & & \\
   $i$ & $\displaystyle p_i$ & & $R_i$ \\[5pt]
    0  & $\displaystyle \P(y \cond \doo(x))$ & & \\
    1  & $\displaystyle \P(y \cond \doo(x,z_1,z_2))$ & & $(Y,\{Z_1,Z_2\},X,\emptyset,3)$ \\
    2  & $\displaystyle \sum_{w_1,w_2} \P(y,w_1,w_2 \cond \doo(x,z_1,z_2))$ & & $m$\\
    3  & $\displaystyle \sum_{w_1,w_2} \P(y \cond \doo(x,z_1,z_2), w_1,w_2) \P(w_1,w_2\cond \doo(x,z_1,z_2))$ & & $r$\\
    4  & $\displaystyle \sum_{w_1,w_2} \P(y \cond \doo(x,z_1,z_2), w_1,w_2) \P(w_1,w_2\cond \doo(x))$ & & $(\{W_1,W_2\}, \{Z_1,Z_2\}, X,\emptyset,3)$ \\
    5  & $\displaystyle \sum_{w_1,w_2} \P(y \cond \doo(x,z_1,z_2), w_1,w_2) \P(w_1,w_2\cond x)$ & & $(\{W_1,W_2\}, X, \emptyset,\emptyset,2)$ \\
    6  & $\displaystyle \sum_{w_1,w_2} \P(y \cond \doo(x), w_1,w_2) \P(w_1,w_2\cond x)$ & & $(Y, \{Z_1,Z_2\}, \{X\},\emptyset, 3)$
  \end{tabularx}
\end{sideways}
  \caption{A do-calculus sequence $\delta_p$ for $p = \P^*(y \cond \doo(x))$ in the graph of Fig.~\ref{fig:theorem_example2}(\subref{fig:theorem_exampleG2}) and the do-calculus sequence $\delta_q$ for $q = \P(y \cond \doo(x))$ in the graph of Fig.~\ref{fig:theorem_example2}(\subref{fig:theorem_exampleD2}) constructed as in the proof for Theorem~\ref{thm:sufficient}.}
  \label{fig:do_calc_seq2}
\end{figure}

%\section*{Supplementary material}
%Supplementary material available at \Bka\ online includes the proof for Lemma~\ref{lem:dist_property} and illustrative examples on the construction in the proof of Theorem~\ref{thm:sufficient}. Examples on the application of Algorithm~\ref{alg:tr} and Corollary~\ref{cor:replacement} in conjunction using the \emph{causaleffect} package are available as an R script.

\section*{References}
\bibliographystyle{elsarticle-num}
\bibliography{experimental_id}
\end{document}